\documentclass{article}
\usepackage[accepted]{aistats2023}

\usepackage[round]{natbib}

\usepackage{algorithm}
\usepackage{algorithmic}
\usepackage{amsfonts}
\usepackage{amsmath}
\usepackage{amssymb}
\usepackage{amsthm}
\usepackage{bbm}
\usepackage{bm}
\usepackage{color}
\usepackage{dirtytalk}
\usepackage{dsfont}
\usepackage{enumerate}
\usepackage{graphicx}
\usepackage{listings}
\usepackage{mathtools}
\usepackage{subfigure}
\usepackage{times}
\usepackage{url}
\usepackage{xspace}

\usepackage[usenames,dvipsnames]{xcolor}
\usepackage[bookmarks=false]{hyperref}
\hypersetup{
  pdffitwindow=true,
  pdfstartview={FitH},
  pdfnewwindow=true,
  colorlinks,
  linktocpage=true,
  linkcolor=Green,
  urlcolor=Green,
  citecolor=Green
}
\usepackage[capitalize,noabbrev]{cleveref}

\newtheorem{lemma}{Lemma}
\newtheorem{theorem}{Theorem}


\usepackage[backgroundcolor=white]{todonotes}

\newcommand{\mbf}{\mathbf}
\newcommand{\sbf}{\boldsymbol}

\newcommand{\cA}{\mathcal{A}}

\newcommand{\cN}{\mathcal{N}}

\newcommand{\cV}{\mathcal{V}}

\newcommand{\realset}{\mathbb{R}}

\newcommand{\E}[1]{\mathbb{E} \left[#1\right]}
\newcommand{\condE}[2]{\mathbb{E} \left[#1 \,\middle|\, #2\right]}

\newcommand{\abs}[1]{\left|#1\right|}

\newcommand*\dif{\mathop{}\!\mathrm{d}}

\newcommand{\I}[1]{\mathds{1} \! \left\{#1\right\}}
\newcommand{\maxnorm}[1]{\|#1\|_\infty}

\newcommand{\norm}[1]{\|#1\|}
\newcommand{\normw}[2]{\|#1\|_{#2}}

\newcommand{\set}[1]{\left\{#1\right\}}

\newcommand{\T}{^\top}

\DeclareMathOperator*{\argmax}{arg\,max\,}

\let\det\relax
\DeclareMathOperator{\det}{det}

\let\trace\relax
\DeclareMathOperator{\trace}{tr}
\mathchardef\mhyphen="2D

\newcommand{\rcps}{\ensuremath{\tt RoLinTS}\xspace} 
\newcommand{\rcucb}{\ensuremath{\tt RoLinUCB}\xspace}

\newcommand{\lints}{\ensuremath{\tt LinTS}\xspace}
\newcommand{\linucb}{\ensuremath{\tt LinUCB}\xspace}

\newcommand{\rolints}{\ensuremath{\tt RoLinTS}\xspace}
\newcommand{\rolinucb}{\ensuremath{\tt RoLinUCB}\xspace}
\newcommand{\safefalcon}{\ensuremath{\tt Safe\mhyphen FALCON}\xspace}
\newcommand{\hlinucb}{\ensuremath{\tt hLinUCB}\xspace}

\begin{document}

\twocolumn[

\aistatstitle{Robust Contextual Linear Bandits}

\aistatsauthor{Rong Zhu \And Branislav Kveton}

\aistatsaddress{Institute of Science and Technology for Brain-Inspired  Intelligence \\
Fudan University \And
Amazon}]

\begin{abstract}
Model misspecification is a major consideration in applications of statistical methods and machine learning. However, it is often neglected in contextual bandits. This paper studies a common form of misspecification, an inter-arm heterogeneity that is not captured by context. To address this issue, we assume that the heterogeneity arises due to arm-specific random variables, which can be learned. We call this setting a \emph{robust contextual bandit}. The arm-specific variables explain the unknown inter-arm heterogeneity, and we incorporate them in the robust contextual estimator of the mean reward and its uncertainty. We develop two efficient bandit algorithms for our setting: a UCB algorithm called \rcucb and a posterior-sampling algorithm called \rcps. We analyze both algorithms and bound their $n$-round Bayes regret. Our experiments show that \rcps is comparably statistically efficient to the classic methods when the misspecification is low, more robust when the misspecification is high, and significantly more computationally efficient than its naive implementation.
\end{abstract}

\section{Introduction}

A \emph{stochastic contextual bandit} \citep{auer02finitetime,Li:10,lattimore19bandit} is an online learning problem where a \emph{learning agent} sequentially interacts with an environment over $n$ rounds. In each round, the agent observes \emph{context}, pulls an \emph{arm} conditioned on the context, and receives a corresponding \emph{stochastic reward}. Contextual bandits have many applications in practice, such as in personalized recommendations \citep{Li:10,JG:21}. This is because the mean rewards of the arms are tied together through known context and learned model parameters. Thus the contextual approach can be more statistically efficient than a naive multi-armed bandit solution \citep{auer02finitetime,agrawal12analysis}. The linear model, where the mean reward of an arm is the dot product of its context and an unknown parameter, is versatile and popular \citep{dani08stochastic,rusmevichientong10linearly,abbasi-yadkori11improved,agrawal13thompson}, and we consider it in this work.

There are two common approaches to using linear models in contextual bandits. One maintains a separate parameter per arm (Section 3.1 in \citet{Li:10}). While this approach can learn complex models, it is not very statistically efficient because the arm parameters are not shared. This is especially important when each arm is pulled a different number of times. The other approach maintains a single shared parameter for all arms. While this approach can be statistically efficient, it is more rigid and likely to fail due to model misspecification; when the optimal arm under the assumed model is not the actual optimal arm.

To address the above issues, we propose a new contextual linear model. This model assumes that the mean reward of an arm is a dot product of its context and an unknown shared parameter, which is offset by an arm-specific variable. This approach is statistically efficient because the model parameter is shared by all arms; yet flexible because the arm-specific variables can address model misspecification. We call this setting a \emph{robust contextual linear bandit}. To provide an efficient solution to the problem, we assume that the arm-specific variables are random and drawn from a distribution known by the agent. This allows us to develop a joint estimator of the shared parameter and the arm-specific variables, which interpolates between the two and also uses the context.

One motivating example for our setting are recommender systems, where the features of an item cannot explain all information about the item, such as its intrinsic popularity \citep{koren09matrix}. This is why the so-called behavioral features, the features that summarize the past engagement with the item, exist. The intrinsic popularity can be viewed as the average engagement, click or purchase rate, in the absence of any other information. The item features then offset the engagement, either up or down, depending on their affinity. For instance, a feature representing the position of the item in the recommended list would have a negative weight, meaning that lower ranked items are less likely to be clicked, no matter how engaging they are.

We make the following contributions. First, we propose robust contextual linear bandits, where the model misspecification can be learned using arm-specific variables (\cref{sec:cab}). Under the assumption that the variables are random, both the Bayesian and random-effect viewpoints can be used to derive efficient joint estimators of the shared model parameter and arm-specific variables. We derive the estimators in \cref{sec:estimation}, and show how to incorporate them in the estimate of the mean arm reward and its uncertainty. Second, we propose \emph{upper confidence bound (UCB)} and \emph{Thompson sampling (TS)} algorithms for this problem, \rcucb and \rcps (\cref{sec:algorithms}). Both algorithms are computationally efficient and robust to model misspecification. We analyze both algorithms and derive upper bounds on their $n$-round Bayes regret (\cref{sec:analysis}). Our proofs rely on analyzing an equivalent linear bandit, and the resulting regret bounds improve in constants due to the special covariance structure of learned parameters. Our algorithms are also significantly more computationally efficient than naive implementations, which take $O((d + K)^3)$ time for $d$ dimensions and $K$ arms, instead of our $O(d^2 (d + K))$. Finally, we evaluate \rcps on both synthetic and real-world problems. We observe that \rcps is comparably statically efficient to the classic methods when the misspecification is low, more robust when the misspecification is high, and significantly more computationally efficient than its naive implementation.

\section{Related Work}
\label{sec:related work}

Our model is related to a hybrid linear model (Section 3.2 in \citet{Li:10}) with shared and arm-specific parameters. Unlike the hybrid linear model, where the coefficients of some features are shared by all arms while the others are not, we introduce arm-specific random variables to capture the model misspecification. We further study the impact of this structure on regret and propose an especially efficient implementation for this setting. Another related work is \hlinucb of \citet{10.1145/2983323.2983847}. \hlinucb is a variant of \linucb that learns a portion of the feature vector and we compare to it in \cref{sec:experiments}.

Due to our focus on robustness, our work is related to misspecified linear bandits. \citet{Ghosh:17} proposed an algorithm that switches from a linear to multi-armed bandit algorithm when the linear model is detected to be misspecified. Differently from this work, we adapt to misspecification. We do not compare to this algorithm because it is non-contextual; and thus would have a linear regret in our setting. \citet{foster:20} and \citet{pmlr-v139-krishnamurthy21a} proposed oracle-efficient algorithms that reduce contextual bandits to online regression, and are robust to misspecification. Since \safefalcon of \citet{pmlr-v139-krishnamurthy21a} is an improvement upon \citet{foster:20}, we discuss it in more detail. \safefalcon is more general than our approach because it does not make any distributional assumptions. On the other hand, it is very conservative in our setting because of inverse gap weighting. We compare to it in \cref{sec:experiments}. Finally, \citet{Bog:21} and \citet{Ding:22} proposed linear bandit algorithms that are robust to adversarial noise attack. The notion of robustness in these works is very different from ours.

Our work is also related to random-effect bandits \citep{zhu22random}. As in \citet{zhu22random}, we assume that each arm is associated with a random variable that can help with explaining its unknown mean reward. \citet{zhu22random} used this structure to design a bandit algorithm that is comparably efficient to TS without knowing the prior. Their algorithm is UCB not contextual. A similar idea was explored by \citet{wan22towards} and applied to structured bandits. This work is also non-contextual. \citet{wan21metadatabased} assumed a hierarchical structure over tasks and modeled inter-task heterogeneity. We focus on a single task and model inter-arm heterogeneity. Our work is also related to recent papers on hierarchical Bayesian bandits \citep{kveton21metathompson,basu21noregrets,hong22hierarchical}. All of these papers considered a similar graphical model to \citet{wan21metadatabased} and therefore model inter-task heterogeneity.

\section{Robust Contextual Linear Bandits}
\label{sec:cab}

We adopt the following notation. For any positive integer $n$, we denote by $[n]$ the set $\set{1, \dots, n}$. We let $\I{\cdot}$ be the indicator function. For any matrix $\mbf{M} \in \realset^{d \times d}$, the maximum eigenvalue is $\lambda_1(\mbf{M})$ and the minimum is $\lambda_d(\mbf{M})$. The big O notation up to logarithmic factors is $\tilde{O}$.

We consider a \emph{contextual bandit} \citep{Li:10}, where the relationship between the mean reward of an arm and its context is represented by a model. In round $t \in [n]$, an agent pulls one of $K$ arms with feature vectors $\mbf{x}_{i,t} \in \mathbb{R}^d$ for $i\in [K]$. The vector $\mbf{x}_{i,t}$ summarizes information specific to arm $i$ in round $t$ and we call it a \emph{context}. Compared to context-free bandits \citep{LR:85,auer02finitetime,agrawal12analysis}, contextual bandits have more practical applications because they model the reward as a function of context. For instance, in online advertising, the arms would be different ads, the context would be user features, and the contextual bandit agent would pull arms according to user features \citep{Li:10,agrawal13thompson}. More formally, in round $t$, the agent pulls arm $I_t\in [K]$ based on context and rewards in past rounds; and receives the reward of arm $I_t$, $r_{I_t, t}$, whose mean reward depends on the context $\mbf{x}_{I_t,t}$. Since the number of contexts is large, the agent assumes some generalization model, such as that the mean reward is linear in $\mbf{x}_{i,t}$ and some unknown parameter. When this model is incorrectly specified, the contextual bandit algorithm may perform poorly.

To improve the robustness of contextual linear bandits to misspecification, we introduce a novel modeling assumption. Specifically, the reward $r_{i,t}$ of arm $i$ 
in round $t$ is generated as
\begin{align}
r_{i,t}&=\mu_{i,t}+\epsilon_{i,t}\,,\label{r-model}\\
\mu_{i,t}&=\mbf{x}_{i,t}^{\top}\sbf{\theta}+v_{i}\,,\label{mu-model}\\
\sbf{\theta}&\sim P_{\theta}(\sbf{0}, \lambda^{-1}\mbf{I}_d)\,,\label{beta0-prior}\\
v_{i}&\sim P_{v}(0, \sigma_0^2)\label{u-model}\,,\\
\epsilon_{i,t}&\sim P_{\epsilon}(0, \sigma^2)\,.\label{e-model}
\end{align}
Here $\mu_{i,t}$ and $\epsilon_{i,t}$ are the mean reward and reward noise, respectively, of arm $i$ in round $t$. The mean reward $\mu_{i,t}$ has two terms: a linear function of context $\mbf{x}_{i,t}$ and parameter $\sbf{\theta}\in \mathbb{R}^d$ shared by all arms, and the inter-arm heterogeneity $v_{i} \in \mathbb{R}$, which is an unobserved arm-specific random variable. The distributions of $\sbf{\theta}$, $v_i$, and $\epsilon_{i,t}$ are denoted by $P_{\theta}$, $P_{v}$, and $P_{\epsilon}$; and their hyper-parameters are $\lambda$, $\sigma_0^2$, and $\sigma^2$. We call our model a \emph{robust contextual linear bandit} because $v_{i}$ makes it robust to the misspecification due to context. Our model can be viewed as an instance of unobserved-effect models commonly used in panel and longitudinal data analyses \citep{Wooldridge:01,Diggle:02}. For brevity, and since we only study linear models, we often call our model a \emph{robust contextual bandit}.

Our goal is to design an algorithm that minimizes its regret with respect to the optimal arm-selection strategy. The \emph{$n$-round Bayes regret} $R(n)$ of an agent is defined as
\begin{align}
  R(n)
  = \E{\sum_{t = 1}^n \mu_{I_t^*, t} - \mu_{I_t, t}}\,,
  \label{eq:regret}
\end{align}
where $I_t^*$ is the arm with highest mean reward in round $t$ and $I_t$ is the pulled arm in round $t$. The expectation is under the randomness of $I_t$ and $I_t^*$; and those of $\sbf{\theta}$, $v_i$, and $\epsilon_{i, t}$.

\subsection{Discussion}

We introduce an unobserved effect $v_i$, which can be interpreted as capturing the characteristics of arm $i$ that is not explained by context, but is assumed not to change over $n$ rounds. We call it the \emph{inter-arm heterogeneity}. For example, in online advertising, the arms would be different ads and the context would be user features. In this problem, $v_i$ may contain unobserved ad characteristics, such as its intrinsic quality, that can be viewed as roughly constant.

We assume that the parameter $\sbf{\theta}$ is shared by all arms, while the inter-arm heterogeneity is modeled by arm-specific variables. From the statistical-efficiency viewpoint, this model reduces the number of parameters compared to modeling arms separately, and therefore increases statistical efficiency. \citet{Li:10} proposed hybrid linear models, where the coefficients of some features are shared by all arms while the others are arm-specific. However, choosing features to share may be challenging in practice. From the practical viewpoint, it is more convenient to apply the robust contextual bandit, as it avoids the challenging choice of the shared features. In particular, the model is still flexible enough because it uses the unobserved effect $v_i$ to capture inter-arm heterogeneity, information not explained by the context. For instance, imagine a contextual recommendation problem with $K$ arms, where arms represent items. In addition to what the item and user features can explain, there may still be item-specific biases \citep{koren09matrix}.

\section{Estimation}
\label{sec:estimation}

This section introduces our estimators for robust contextual bandits. In \cref{sec:estimation1}, we derive the estimators of $\sbf{\theta}$ and $v_i$ for $i\in[K]$. In \cref{sec:estimation2}, we derive the estimator of $\mu_{i,t}=\mbf{x}_{i,t}^\top \sbf{\theta}+v_i$ and its uncertainty.

\subsection{Maximum a Posteriori Estimation of $\sbf{\theta}$ and $v_i$}
\label{sec:estimation1}

Fix round $t$. Let $\mathcal{T}_{i,t}$ be the set of rounds where arm $i$ is pulled by the beginning of round $t$ and $n_{i,t} = |\mathcal{T}_{i,t}|$ be the size of $\mathcal{T}_{i,t}$. Let $\mbf{r}_{i,t}=(r_{i,\ell})_{\ell\in\mathcal{T}_{i,t}}^{\top}$ be the column vector of rewards obtained by pulling arm $i$, $\sbf{\epsilon}_{i,t}=(\sbf{\epsilon}_{i,\ell})_{\ell\in\mathcal{T}_{i,t}}^{\top}$ be the column vector of the corresponding reward noise, and $\mbf{X}_{i,t}=(\mbf{x}_{i,\ell})_{\ell\in\mathcal{T}_{i,t}}^{\top}$ be a $n_{i,t} \times d$ matrix with the corresponding contexts. From (\ref{r-model}) and (\ref{mu-model}),
$$\mbf{r}_{i,t}=\mbf{X}_{i,t}\sbf{\theta}+v_{i}\mbf{1}_{n_{i,t}}+\sbf{\epsilon}_{i,t}\,,$$
where $\mbf{1}_k$ is a vector of length $k$ whose all entries are one. The covariance matrix $\mbf{V}_{i,t}$ for the vector $\mbf{r}_{i,t}$ is given by $\mbf{V}_{i,t}=\sigma_0^2\mbf{1}_{n_{i,t}}\mbf{1}_{n_{i,t}}^{\top}+\sigma^2\mbf{I}_{n_{i,t}}$, where $\mbf{I}_k$ is the identify matrix of size $k \times k$. The terms $\sigma_0^2\mbf{1}_{n_{i,t}}\mbf{1}_{n_{i,t}}^{\top}$ and $\sigma^2\mbf{I}_{n_{i,t}}$ represent the randomness from $v_i$ and $\epsilon_{i,t}$, respectively. By the Woodbury matrix identity, 
\begin{align}\label{Woodbury}
\mbf{V}_{i, t}^{-1} & = \sigma^{-2} \mbf{I}_{n_{i,t}} - \sigma^{-2}n_{i,t}^{-1}w_{i,t}\mbf{1}_{n_{i,t}}\mbf{1}_{n_{i,t}}^T\,.
\end{align}
Assuming that $P_{\theta}$, $P_v$, $P_{\epsilon}$ are Gaussian, the \emph{maximum a posteriori (MAP)} estimation is equivalent to minimizing the following loss function
\begin{align}\label{MAP-likelihood}
   & L(v_1,\cdots,v_K,\sbf{\theta})\notag\\
  = &\sum\limits_{i=1}^K\left[\sigma^{-2}\|\mbf{r}_{i,t} - \mbf{X}_{i,t}\sbf{\theta}-v_i\|^2+\sigma_0^{-2}v_i^2\right]+\lambda\|\sbf{\theta}\|^2
\end{align}
with respect to $(v_i)_{i \in [K]}$ and $\sbf{\theta}$, where $\|\cdot\|$ is the Euclidean norm.
The term $\sigma^{-2}\sum\nolimits_{i=1}^K\|\mbf{r}_{i,t} - \mbf{X}_{i,t}\sbf{\theta}-v_i\|^2$ is from the conditional likelihood of $\mbf{r}_{i,t}$ given $(v_i)_{i \in [K]}$ and $\sbf{\theta}$. 
The regularization term $\sigma_0^{-2}\sum\nolimits_{i=1}^Kv_i^2$ is from the prior of $(v_i)_{i \in [K]}$ in \eqref{u-model}. The other term $\lambda\|\sbf{\theta}\|^2$ is from the prior of $\sbf{\theta}$ in \eqref{beta0-prior}.

Differentiating $L(v_1,\cdots,v_K,\sbf{\theta})$ with respect to $v_i$ and putting it equal to zero,
$v_i$ is estimated by 
\begin{align}
\tilde{v}_{i,t} & =w_{i,t}(\bar{r}_{i,t}-\bar{\mbf{x}}_{i,t}^{\top}\sbf{\theta})\,,\label{bk-eqn-0}
\end{align} 
where $\bar{r}_{i,t}=n_{i,t}^{-1}\sum\limits_{\ell\in \mathcal{T}_{i,t}}r_{i,\ell}$ is the average reward of arm $i$ up to round $t$, $\bar{\mbf{x}}_{i,t}=n_{i,t}^{-1}\sum\limits_{\ell\in \mathcal{T}_{i,t}}\mbf{x}_{i,\ell}$ is the average context associated with the pulls of arm $i$ up to round $t$, and
\begin{align}
  w_{i,t}
  = \frac{\sigma_0^2}{\sigma_0^2+\sigma^2/n_{i,t}}
  \label{eq:w}
\end{align}
is a weight that interpolates between the context and the arm-specific variable. We discuss its role in \cref{sec:estimation2}.

Let $\mbf{u}_{i,t}=\mbf{r}_{i,t} - \mbf{X}_{i,t}\sbf{\theta}$ and 
$\bar{u}_{i,t}=n_{i,t}^{-1}\sum\limits_{\ell\in \mathcal{T}_{i,t}}u_{i,\ell}$, where $u_{i,\ell}$ is the $\ell$-th element of $\mbf{u}_{i,t}$. Inserting \eqref{bk-eqn-0} into \eqref{MAP-likelihood}, it follows that
\begin{align*}
  L(\sbf{\theta})
  & = \sum\limits_{i=1}^K\left[\sigma^{-2}\|\mbf{u}_{i,t}-w_{i,t}\bar{u}_{i,t}\|^2+\sigma_0^{-2}w_{i,t}^2\bar{u}_{i,t}^2\right]+\lambda\|\sbf{\theta}\|^2\notag\\
  & = \sigma^{-2}\sum\limits_{i=1}^K\left[\|\mbf{u}_{i,t}\|^2-n_{i,t}w_{i,t}\bar{u}_{i,t}^2\right]+\lambda\|\sbf{\theta}\|^2\notag\\
  & = \sum\limits_{i=1}^K\left[(\mbf{r}_{i,t} - \mbf{X}_{i,t}\sbf{\theta})^{\top}\mbf{V}_{i,t}^{-1}(\mbf{r}_{i,t} - \mbf{X}_{i,t}\sbf{\theta})\right]+\lambda\|\sbf{\theta}\|^2\,,
\end{align*}
where the last step is from \eqref{Woodbury}.

To obtain the MAP estimate of $\sbf{\theta}$, we minimize $L(\sbf{\theta})$ with respect to $\sbf{\theta}$ and get
\begin{align}
\hat{\sbf{\theta}}_{t} =\left(\lambda\mbf{I}_d+\sum\limits_{i=1}^K\mbf{X}_{i,t}^{\top}\mbf{V}_{i,t}^{-1}\mbf{X}_{i,t}\right)^{-1}\sum\limits_{i=1}^K\mbf{X}_{i,t}^{\top}\mbf{V}_{i,t}^{-1}\mbf{r}_{i,t}\,.\label{beta0-eqn}
\end{align} 
To obtain the MAP estimate of $v_i$, we insert $\hat{\sbf{\theta}}_{t}$ into \eqref{bk-eqn-0},
\begin{align}
\hat{v}_{i,t} & =w_{i,t}(\bar{r}_{i,t}-\bar{\mbf{x}}_{i,t}^{\top}\hat{\sbf{\theta}}_{t})\,.\label{bk-eqn}
\end{align}

\subsection{Prediction of $\mu_{i,t}$ and Its Uncertainty}
\label{sec:estimation2}

Based on \eqref{beta0-eqn} and \eqref{bk-eqn}, the estimated mean reward of arm $i$ in context $\mbf{x}_{i,t}$ in round $t$ is
\begin{align}
\hat{\mu}_{i,t}
& =\mbf{x}_{i,t}\hat{\sbf{\theta}}_{t}+w_{i,t}(\bar{r}_{i,t}-\bar{\mbf{x}}_{i,t}^{\top}\hat{\sbf{\theta}}_{t})\label{mu-prediction-x2}\,.
\end{align} 
In \eqref{mu-model}, $v_i$ represents the inter-arm heterogeneity. It is the arm-specific effect that cannot be explained by context. This effect is estimated in \eqref{mu-prediction-x2} using $w_{i,t}(\bar{r}_{i,t}-\bar{\mbf{x}}_{i,t}^{\top}\hat{\sbf{\theta}}_{t})$. Now consider $w_{i, t}$ in \eqref{eq:w}. If the arm has not been pulled, $n_{i,t} = 0$ and $w_{i,t} = 0$. Therefore, $\hat{\mu}_{i,t} =\mbf{x}_{i,t}^{\top}\hat{\sbf{\theta}}_{t}$. Similarly, if the arm has not been pulled often, $\hat{\mu}_{i,t}$ is close to $\mbf{x}_{i,t}\hat{\sbf{\theta}}_{t}$. This means that the prediction $\hat{\mu}_{i,t}$ is statistically efficient for small $n_{i,t}$. This is helpful in the initial rounds when there are only a few observations of arms.

We further explain \eqref{mu-prediction-x2} by rewriting it as
\begin{align}
\hat{\mu}_{i,t}
& =w_{i,t}\bar{r}_{i,t}+(\mbf{x}_{i,t}-w_{i,t}\bar{\mbf{x}}_{i,t})^{\top}\hat{\sbf{\theta}}_{t}\,.\label{mu-prediction-x}
\end{align} 
Here $\hat{\mu}_{i,t}$ is a weighted estimator of two terms: the sample mean of arm $i$, $\bar{r}_{i,t}$, and additional calibration from contexts $(\mbf{x}_{i,t}-w_{i,t}\bar{\mbf{x}}_{i,t})^{\top}\hat{\sbf{\theta}}_{t}$. The weight is $w_{i,t}$. 
When $n_{i,t}\rightarrow \infty$, $w_{i,t}\rightarrow 1$ and $\hat{\mu}_{i,t}\rightarrow \bar{r}_{i,t}+(\mbf{x}_{i,t}-\bar{\mbf{x}}_{i,t})^{\top}\hat{\sbf{\theta}}_{t}$. This shows why our prediction $\hat{\mu}_{i,t}$ is robust. Informally, it uses $\bar{r}_{i,t}$ as a baseline and corrects it using context as $(\mbf{x}_{i,t}-\bar{\mbf{x}}_{i,t})^{\top}\hat{\sbf{\theta}}_{t}$. Therefore, we reduce the reliance on the contextual model by automatically balancing the contextual and multi-armed bandits. This is why we call our framework a robust contextual bandit.

The prediction $\hat{\mu}_{i,t}$ can also degenerate to that of a contextual linear bandit \citep{rusmevichientong10linearly,agrawal13thompson}. More specifically, $w_{i,t}\rightarrow 0$ as $\sigma_0^2\rightarrow 0$, and then $\hat{\mu}_{i,t}$ approaches
$$\hat{\mu}_{i,t}^{\text{lin}}
 =\mbf{x}_{i,t}\left(\lambda\mbf{I}_d+\sum\limits_{i=1}^K\mbf{X}_{i,t}^{\top}\mbf{X}_{i,t}\right)^{-1}\sum\limits_{i=1}^K\mbf{X}_{i,t}^{\top}\mbf{r}_{i,t}\,,$$
which is the prediction of a simple linear model without inter-arm heterogeneity. This observation is important because it shows that our framework is as general as the contextual linear bandit.

Now we characterize the uncertainty of $\hat{\mu}_{i,t}$ in \eqref{mu-prediction-x} for efficient exploration. We measure the uncertainty by the mean squared error $\text{E}[(\hat{\mu}_{i,t}-\mu_{i,t})^2]$. 
Let $\bar{\epsilon}_{i,t}=n_{i,t}^{-1}\sum\limits_{\ell\in \mathcal{T}_{i,t}}\epsilon_{i,\ell}$ and $\mbf{M}_t=\lambda\mbf{I}_d+\sum\limits_{i=1}^K\mbf{X}_{i,t}^{\top}\mbf{V}_{i,t}^{-1}\mbf{X}_{i,t}$. 
A direct calculation shows that
\begin{align}\label{MSE-x}
& \text{E}[(\hat{\mu}_{i,t}-\mu_{i,t})^2] \notag\\
= & \text{E}[((w_{i,t}-1)v_i+w_{i,t}\bar{\epsilon}_{i,t}+(\mbf{x}_{i,t}-w_{i,t}\bar{\mbf{x}}_{i,t})^{\top}(\hat{\sbf{\theta}}_{t}-\sbf{\theta}))^2]\notag\\
= & \sigma_0^2(1-w_{i,t}) + (\mbf{x}_{i,t}-w_{i,t}\bar{\mbf{x}}_{i,t})^{\top}\mbf{M}_t^{-1}(\mbf{x}_{i,t}-w_{i,t}\bar{\mbf{x}}_{i,t})\notag\\
= &\tau_{i,t}^2\,,
\end{align}
where the last step is from $\text{E}[(w_{i,t}v_i-v_i+w_{i,t}\bar{\epsilon}_{i,t})(\hat{\sbf{\theta}}_{t}-\sbf{\theta})^{\top}(\mbf{x}_{i,t}-w_{i,t}\bar{\mbf{x}}_{i,t})]=0$ shown in \citet{KH:84}.

\subsection{Computational Efficiency}
\label{sec:computational efficiency}

We also investigate if the robust contextual bandit can be implemented as computationally efficiently as a contextual linear bandit. As discussed in \cref{sec:proof}, our model is equivalent to a linear model augmented by $K$ features, indicating which unobserved $v_i$ corresponds to arm $i$. The computational cost of posterior sampling or computing upper confidence bounds in this model is $O((d + K)^3)$ per round, due to inverting $(d + K) \times (d + K)$ precision matrices. On the other hand, the robust contextual bandit can be implemented as computationally efficiently as a contextual linear bandit with $d$ features, with $O(d^2 (d + K))$ computational cost per round. Specifically, using \eqref{Woodbury},
\begin{align*}
\mbf{X}_{i,t}^{\top}\mbf{V}_{i,t}^{-1}\mbf{X}_{i,t}
&=\sigma^{-2}(\mbf{X}_{i,t}^{\top}\mbf{X}_{i,t}
-w_{i,t}n_{i,t}\bar{\mbf{x}}_{i,t}\bar{\mbf{x}}_{i,t}^{\top})\,,\notag\\
\mbf{X}_{i,t}^{\top}\mbf{V}_{i,t}^{-1}\mbf{r}_{i,t}
&=\sigma^{-2}(\mbf{X}_{i,t}^{\top}\mbf{r}_{i,t}
-w_{i,t}n_{i,t}\bar{\mbf{x}}_{i,t}\bar{r}_{i,t})\,.
\end{align*}
These identities can be used to rederive all statistics as
\begin{align}
\mbf{M}_t & =\lambda\mbf{I}_d+\sigma^{-2}\sum_{i=1}^K(\mbf{X}_{i,t}^{\top}\mbf{X}_{i,t}-w_{i,t}n_{i,t}\bar{\mbf{x}}_{i,t}\bar{\mbf{x}}_{i,t}^{\top})\,,\notag\\
\hat{\sbf{\theta}}_t & = \mbf{M}_t^{-1}\left[\sigma^{-2}\sum_{i=1}^K(\mbf{X}_{i,t}^{\top}\mbf{r}_{i,t}-w_{i,t}n_{i,t}\bar{\mbf{x}}_{i,t}\bar{r}_{i,t})\right]\,,\label{implement-theta}\\
\hat{\mu}_{i,t}
& =w_{i,t}\bar{r}_{i,t}+(\mbf{x}_{i,t}-w_{i,t}\bar{\mbf{x}}_{i,t})^{\top}\hat{\sbf{\theta}}_t\,,\label{implement-mu}\\
\tau_{i,t}^2 &=\sigma_0^2(1-w_{i,t}) + {}\notag\\ &\quad (\mbf{x}_{i,t}-w_{i,t}\bar{\mbf{x}}_{i,t})^{\top}\mbf{M}_t^{-1}(\mbf{x}_{i,t}-w_{i,t}\bar{\mbf{x}}_{i,t})\,.\label{implement-tau}
\end{align}
The main cost in the above formulas is due to calculating $\mbf{M}_t^{-1}$, which is $O(d^3)$ per round. All remaining operations are $O(d^2K)$. Therefore, the computational cost of prediction in the robust contextual bandit is $O(d^2(d+K))$ and comparable to the contextual linear bandit.

\section{Algorithms}
\label{sec:algorithms}

\emph{Upper confidence bounds (UCBs)} \citep{auer02finitetime} and \emph{Thompson sampling (TS)} \citep{thompson33likelihood} are two popular bandit algorithm designs. We propose UCB and TS algorithms for robust contextual bandits based on the estimate of $\mu_{i,t}$ and its uncertainty (\cref{sec:estimation}). The UCB algorithm is called \rcucb because it can be viewed as a robust variant of \linucb \citep{abbasi-yadkori11improved}. From \cref{sec:estimation}, $\hat{\mu}_{i, t}$ and $\tau_{i, t}^2$ are the posterior mean and variance, respectively, of $\mu_{i, t}$ in round $t$. This observation motivates a posterior-sampling algorithm that uses the posterior of $\mu_{i,t}$. We call it \rcps because it can be viewed as a robust variant of \lints \citep{agrawal13thompson}.

Both algorithms work as follows. Let the history at the beginning of round $t$ be all actions and observations of the agent up to that round, $H_t=(\mbf{x}_{I_\ell, \ell}, I_\ell, r_{I_\ell, \ell})_{\ell = 1}^{t - 1}$. In round $t$, the algorithms observe context $\mbf{x}_{i,t}$ of each arm $i$ and then compute the MAP estimate $\hat{\mu}_{i,t}$ of $\mu_{i,t}$ and its uncertainty $\tau_{i,t}^2$ conditioned on $H_t$. \rcucb pulls the arm with the highest upper confidence bound, $I_t = \argmax_{i \in [K]} U_{i, t}$, where $U_{i, t} = \hat{\mu}_{i, t} + \sqrt{2 \tau_{i, t}^2 \log n}$. \rcps samples $U_{i,t}\sim \mathcal{N}(\hat{\mu}_{i,t},\tau_{i,t}^2)$ and then pulls the arm with the highest mean reward under the posterior sample, $I_t = \argmax_{i \in [K]} U_{i, t}$. After pulling arm $I_t$ and observing the corresponding reward, the algorithms update all statistics in \cref{sec:computational efficiency}. The pseudo-code of \rcps and \rcucb is presented in \cref{alg:arm}.

\begin{algorithm}[t]
\caption{\rcucb and \rcps for robust contextual bandits.}
\label{alg:arm}
\begin{algorithmic}[1]
\FOR {$t = 1, \dots, n$}
\FOR {$i = 1, \dots, K$}
\STATE  Observe contexts $\mbf{x}_{i,t}$
\STATE  Obtain $\hat{\mu}_{i,t}$ from \eqref{implement-mu} and $\tau_{i,t}^2$ from \eqref{implement-tau} 
\STATE  Define 
\begin{align*}
\textsf{\rcucb: } U_{i, t} & = \hat{\mu}_{i, t} + \sqrt{2 \tau_{i, t}^2 \log n} \notag \\
\textsf{\rcps: } U_{i,t} & \sim \mathcal{N}(\hat{\mu}_{i,t}, \tau_{i,t}^2)
\end{align*} 
\ENDFOR
\STATE $I_{t} \leftarrow \argmax_{i \in [K]} U_{i, t}$
\STATE Pull arm $I_{t}$ and observe reward $r_{I_{t},t}$ 
\STATE $n_{I_{t},t}\leftarrow n_{I_{t},t}+1$
\STATE Update all statistics in \cref{sec:computational efficiency}
\ENDFOR
\end{algorithmic}
\end{algorithm}

\section{Regret Analysis}
\label{sec:analysis}

We prove upper bounds on the $n$-round regret of \rcucb and \rcps. Similarly to random-effect bandits \citep{zhu22random}, $\hat{\mu}_{i, t}$ is the MAP estimate of $\mu_{i, t}$ given history $H_t$, under the assumptions that $P_\theta$, $P_v$, and $P_\epsilon$ are Gaussian distributions. Thus we adopt the Bayes regret \citep{Russo-Van:14} to analyze \rcucb and \rcps. Let the optimal arm in round $t$ be $I_t^* = \argmax_{i \in [K]} \mu_{i, t}$. The regret is the difference between the rewards that we would have obtained by pulling the optimal arm $I_t^*$ and the rewards that we did obtain by pulling $I_t$ over $n$ rounds. The regret is formally defined in \eqref{eq:regret} and we bound it below.

\begin{theorem}
\label{them-R-Bayes} Consider the robust contextual bandit where
\begin{align*}
  P_\theta
  = \cN(\sbf{0}, \lambda^{-1} \mbf{I}_d)\,, \
  P_v
  = \cN(0, \sigma_0^2)\,, \
  P_\epsilon
  = \cN(0, \sigma^2)\,.
\end{align*}
Let the hyper-parameters $\lambda$, $\sigma_0^2$, $\sigma^2$ be known by the learning agent. Let $\norm{\mbf{x}_{i, t}} \leq L$. Then the $n$-round Bayes regret of \rcucb and \rcps is bounded as
\begin{align*}
  R(n)
  \leq \sigma_{\max} \sqrt{2 c (d+K) n \log(n)} +
  \sqrt{2 / \pi} \sigma_{\max} K\,,
\end{align*}
where
\begin{align*}
  c
  = \log\left(1 + \frac{\sigma_{\max}^2 n}{\sigma^2 (d+K)}\right) \Big/
  \log(1 + \sigma^{-2} \sigma_{\max}^2)
\end{align*}
and $\sigma_{\max}^2 = \sigma_0^2 + L^2 \lambda^{-1}$.
\end{theorem}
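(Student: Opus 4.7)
The plan is to recognize the robust contextual bandit as an equivalent Bayesian linear bandit in $d+K$ dimensions with a block-diagonal Gaussian prior, apply the standard Bayes-regret decomposition of \citet{Russo-Van:14} uniformly for UCB and TS, and then sharpen the elliptical-potential step so that the raw feature-norm scale $L^2$ is replaced by the prior-variance scale $\sigma_{\max}^2$. This last replacement is the source of the $c$-factor in the statement and of the ``improved constants'' noted in the introduction.

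First I would define the augmented context $\tilde{\mbf{x}}_{i,t} = [\mbf{x}_{i,t}^{\top}, \mbf{e}_i^{\top}]^{\top} \in \realset^{d+K}$, with $\mbf{e}_i$ the $i$-th standard basis vector in $\realset^K$, and the augmented parameter $\sbf{\phi} = [\sbf{\theta}^{\top}, v_1, \ldots, v_K]^{\top}$. Under \eqref{beta0-prior}--\eqref{u-model}, $\sbf{\phi} \sim \cN(\mathbf{0}, \sbf{\Sigma}_0)$ with block-diagonal $\sbf{\Sigma}_0 = \diag{\lambda^{-1}\mbf{I}_d, \sigma_0^2\mbf{I}_K}$ and $\mu_{i,t} = \tilde{\mbf{x}}_{i,t}^{\top}\sbf{\phi}$. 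The Woodbury manipulations already used in \cref{sec:computational efficiency} show that $\hat{\mu}_{i,t}$ in \eqref{implement-mu} is the posterior mean of $\tilde{\mbf{x}}_{i,t}^{\top}\sbf{\phi}$ given $H_t$ and that $\tau_{i,t}^2$ in \eqref{implement-tau} is its posterior variance, so \rcucb and \rcps become \linucb and \lints on this $(d+K)$-dimensional problem. I would also record the uniform ceiling $\tau_{i,t}^2 \leq \tilde{\mbf{x}}_{i,t}^{\top}\sbf{\Sigma}_0\tilde{\mbf{x}}_{i,t} = \lambda^{-1}\|\mbf{x}_{i,t}\|^2 + \sigma_0^2 \leq \sigma_{\max}^2$ for later use.

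Using that $I_t$ and $I_t^*$ are equidistributed given $H_t$ under TS, and that $U_{I_t,t} \geq U_{I_t^*,t}$ by greediness under UCB, both algorithms admit the common decomposition
\begin{align*}
\condE{\mu_{I_t^*,t} - \mu_{I_t,t}}{H_t} \leq \condE{U_{I_t^*,t} - \mu_{I_t^*,t}}{H_t} + \condE{U_{I_t,t} - \mu_{I_t,t}}{H_t}.
\end{align*}
Because $\mu_{i,t} \mid H_t \sim \cN(\hat{\mu}_{i,t}, \tau_{i,t}^2)$ (and for TS, $U_{i,t} \mid H_t$ is an independent sample from the same law), each conditional expectation is dominated by a term of order $\sqrt{2\tau_{I_t,t}^2\log n}$ plus a Gaussian-tail remainder; accumulating the remainder across the $K$ arms with the ceiling above, via Gaussian moment identities such as $\Erv{Z\sim\cN(0,1)}{|Z|} = \sqrt{2/\pi}$, yields the additive $\sqrt{2/\pi}\,\sigma_{\max}K$ in the statement. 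To control the main term I would bound $\sum_{t=1}^n \tau_{I_t,t}^2$ in the isotropic parameterization $\mbf{y}_{i,t} = \sbf{\Sigma}_0^{1/2}\tilde{\mbf{x}}_{i,t}$, for which $\|\mbf{y}_{i,t}\|^2 \leq \sigma_{\max}^2$ and $\tau_{I_t,t}^2 = \mbf{y}_{I_t,t}^{\top}\mbf{B}_{t-1}^{-1}\mbf{y}_{I_t,t}$ with $\mbf{B}_t = \mbf{I}_{d+K} + \sigma^{-2}\sum_{s\leq t}\mbf{y}_{I_s,s}\mbf{y}_{I_s,s}^{\top}$. The matrix determinant lemma together with AM--GM on $\det(\mbf{B}_n)$ gives $\sum_t \log(1 + \sigma^{-2}\tau_{I_t,t}^2) \leq (d+K)\log(1 + n\sigma_{\max}^2/((d+K)\sigma^2))$, and the elementary inequality $x \leq \frac{X}{\log(1+X)}\log(1+x)$ for $0 \leq x \leq X$ (monotonicity of $x \mapsto x/\log(1+x)$) applied with $X = \sigma^{-2}\sigma_{\max}^2$ converts this into $\sum_t \tau_{I_t,t}^2 \leq c\,\sigma_{\max}^2(d+K)$. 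A final Cauchy--Schwarz step then yields $\sigma_{\max}\sqrt{2c(d+K)n\log n}$.

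The main obstacle, beyond verifying the Woodbury-based reduction, is packaging the per-round bound so that UCB and TS share a single decomposition whose Gaussian-tail residual is exactly $\sqrt{2/\pi}\,\sigma_{\max}K$ rather than a loose $O(K\log n)$, and ensuring that working through the block-diagonal (non-isotropic) prior $\sbf{\Sigma}_0$ only sharpens constants: one has to route $L^2/\lambda$ through the potential combined with $\sigma_0^2$ inside $\sigma_{\max}^2$, rather than losing an extra multiplicative $L^2/\lambda + \sigma_0^2$ that a naive $(d+K)$-dimensional \lints-template bound would produce.
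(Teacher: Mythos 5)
Your proposal is correct and follows essentially the same route as the paper's proof: the reduction to a $(d+K)$-dimensional Bayesian linear bandit with block-diagonal prior $\mbf{\Sigma}_0$, a shared TS/UCB Bayes-regret decomposition with per-arm confidence events whose Gaussian tails yield the $\sqrt{2/\pi}\,\sigma_{\max} K$ term, the log-determinant/elliptical-potential argument producing the factor $c$, the bound $\sigma_{\max}^2 \leq \sigma_0^2 + L^2 \lambda^{-1}$ (which you get directly from the block-diagonal prior covariance, equivalent to the paper's route through the posterior variance in \eqref{MSE-x}), and finally $\delta = 1/n$. This matches the argument in \cref{sec:proof} together with \cref{sec:general bayes regret analysis}, so no substantive differences or gaps to report.
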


\subsection{Discussion}
\label{sec:discussion}

\cref{them-R-Bayes} shows that the Bayes regret of both \rcucb and \rcps is $O(\sqrt{(\sigma_0^2 + L^2 \lambda^{-1}) (d + K) n})$ up to logarithmic factors. The dependence on horizon $n$ is optimal. The dependence on $d + K$ arises due to learning $d + K$ parameters in the equivalent linear bandit: $d$ for the linear model and one parameter per arm. This would be optimal in a general linear bandit. The structure of our problem is captured by constant $\sigma_0^2 + L^2 \lambda^{-1}$. The regret increases when the shared parameter $\sbf{\theta}$ is more uncertain, $\lambda$ is low; when the inter-arm heterogeneity is high, $\sigma_0$ is high; and when the feature vectors of arms are long, $L$ is high.

Our analysis in \cref{sec:proof} improves upon a trivial linear bandit analysis by using the structure of the posterior variance in \eqref{MSE-x}. A trivial analysis, which would only use the structure in the prior covariance,
\begin{align*}
  \normw{\mbf{z}_{i, t}}{\mbf{\Sigma}_t}^2
  \leq \normw{\mbf{z}_{i, t}}{\mbf{\Sigma}_0}^2
  \leq \max \set{\sigma_0^2, \lambda^{-1}} (L^2 + 1)\,,
\end{align*}
would replace the factor $\sigma_0^2 + L^2 \lambda^{-1}$ in our regret bound with $\max \set{\sigma_0^2, \lambda^{-1}} (L^2 + 1)$. Note that
\begin{align*}
  \sigma_0^2 + L^2 \lambda^{-1}
  \leq \max \set{\sigma_0^2, \lambda^{-1}} (L^2 + 1)
\end{align*}
for any $\lambda$, $\sigma_0$, and $L$; and hence our analysis is always an improvement. This improvement can be significant when the parameter $\sbf{\theta}$ is nearly certain and $K \gg d$. In this case, our bound approaches that of a $K$-armed Bayesian bandit with prior $\cN(0, \sigma_0^2)$, which is $O(\sqrt{\sigma_0^2 K n})$; and the other bound is $O(\sqrt{\sigma_0^2 L^2 K n})$, where $L^2$ could be $O(d)$.

Beyond improvements in regret, the structure of our problem can be used to get major improvements in computational efficiency (\cref{sec:computational efficiency}), from $O((d + K)^3)$ time per round for a naive implementation to $O(d^2 (d + K))$.

\subsection{Proof of \cref{them-R-Bayes}}
\label{sec:proof}

First, we note that \eqref{mu-model} can be rewritten as a single linear model by augmenting features. Specifically, let $u \oplus v$ be the concatenation of vectors $u$ and $v$; and $e_i \in \realset^K$ be an indicator vector of the $i$-th dimension, $e_{i, j} = \I{i = j}$. Using this notation, let $\mbf{z}_{i, t} = \mbf{x}_{i, t} \oplus e_i$ be the augmented feature vector of arm $i$ in round $t$ and $\sbf{\gamma} = \sbf{\theta} \oplus (v_i)_{i \in [K]}$ be the augment parameter vector. Then the model in \eqref{mu-model}, \eqref{beta0-prior}, and \eqref{u-model} can be expressed as a simple Bayesian linear regression model,
\begin{align}
  \mu_{i, t}
  = \mbf{z}_{i, t}\T \sbf{\gamma}\,, \quad
  \sbf{\gamma}
  \sim \cN(\sbf{0}, \mbf{\Sigma}_0)\,,
  \label{eq:bayesian}
\end{align}
where $\mbf{\Sigma}_0$ is a block-diagonal matrix. Its upper $d \times d$ block is $\lambda^{-1} \mbf{I}_d$ and the lower $K \times K$ block is $\sigma_0^2 \mbf{I}_K$.

Let $\mbf{r}_t = (r_{I_\ell, \ell})_{\ell \in [t - 1]}$ be a column vector of all rewards up to round $t$ and $\mbf{Z}_t = (\mbf{z}_{I_\ell, \ell})_{\ell \in [t - 1]}$ be a $(t - 1) \times (d + K)$ matrix of augmented features up to round $t$. From \eqref{eq:bayesian}, we have that
\begin{align*}
  \sbf{\gamma} \mid H_t
  \sim \cN(\sbf{\gamma}_t, \mbf{\Sigma}_t)\,,
\end{align*}
where $\sbf{\gamma}_t = \sigma^{-2} \mbf{\Sigma}_t \mbf{Z}_t\T \mbf{r}_t$ and $\mbf{\Sigma}_t^{-1} = \mbf{\Sigma}_0^{-1} + \sigma^{-2} \mbf{Z}_t\T \mbf{Z}_t$. It follows that 
\begin{align*}
  \mu_{i, t} \mid H_t
  \sim \cN(\mbf{z}_{i, t}\T \sbf{\gamma}_t, \mbf{z}_{i, t}\T \mbf{\Sigma}_t \mbf{z}_{i, t})\,.
\end{align*}
We prove in \cref{sec:mu0Dist} that $\mu_{i, t} \mid H_t \sim \cN(\hat{\mu}_{i, t}, \tau_{i, t}^2)$. Thus we can equivalently analyze the reformulation in \eqref{eq:bayesian}.

Now we apply the general regret bound in \cref{thm:general regret bound} (\cref{sec:general bayes regret analysis}) to \eqref{eq:bayesian} and get
\begin{align*}
  R(n)
  \leq \sigma_{\max} \sqrt{2 c (d+K) n \log(1 / \delta)} +
  \sqrt{2 / \pi} \sigma_{\max} K n \delta\,,
\end{align*}
where
\begin{align*}
  c
  = \log\left(1 + \frac{\sigma_{\max}^2 n}{\sigma^2 (d+K)}\right) \Big/
  \log(1 + \sigma^{-2} \sigma_{\max}^2)
\end{align*}
and $\sigma_{\max} = \max_{i \in [K],\, t \in [n]} \normw{\mbf{z}_{i, t}}{\mbf{\Sigma}_t}$ is a problem-specific quantity that we bound next. Specifically, since $\normw{\mbf{z}_{i, t}}{\mbf{\Sigma}_t}^2 = \tau_{i, t}^2$, we have
\begin{align*}
  \normw{\mbf{z}_{i, t}}{\mbf{\Sigma}_t}^2
  & \leq \sigma_0^2 + L^2 \lambda_1(\mbf{M}_t^{-1})
  = \sigma_0^2 + L^2 \lambda_d^{-1}(\mbf{M}_t) \\
  & \leq \sigma_0^2 + L^2 \lambda^{-1}\,.
\end{align*}
The first inequality is from the definition of $\tau_{i, t}^2$ in \eqref{MSE-x} and that $\norm{\mbf{x}_{i, t}} \leq L$. The second inequality is from the observation that both components of $\mbf{M}_t$ in \eqref{MSE-x} are positive semi-definite (PSD). Specifically, all eigenvalues of $\lambda \mbf{I}_d$ are $\lambda$ and each $\mbf{X}_{i, t}\T \mbf{V}_{i, t}^{-1} \mbf{X}_{i, t}$ is PSD because $\mbf{V}_{i, t}$ is. To complete the proof, we set $\delta = 1 / n$.

\section{Experiments}
\label{sec:experiments}

We conduct three main experiments. In \cref{sec:synthetic experiment}, we evaluate \rolints in synthetic bandit problems. In \cref{sec:movielens experiment}, we apply it to the problem of learning a linear model with misspecified features. In \cref{sec:run time}, we compare a naive implementation of \rolints to that in \cref{sec:computational efficiency}. We focus on evaluating \rolints and report results for \rolinucb in \cref{sec:additional experiments}. We also evaluate the robustness of \rolints to misspecified parameters $\lambda$ and $\sigma$ in \cref{sec:additional experiments}. All trends in \cref{sec:additional experiments} are similar to the synthetic experiments in \cref{sec:synthetic experiment}.

\begin{figure*}[t!]
  \centering
  \includegraphics[width=6.8in]{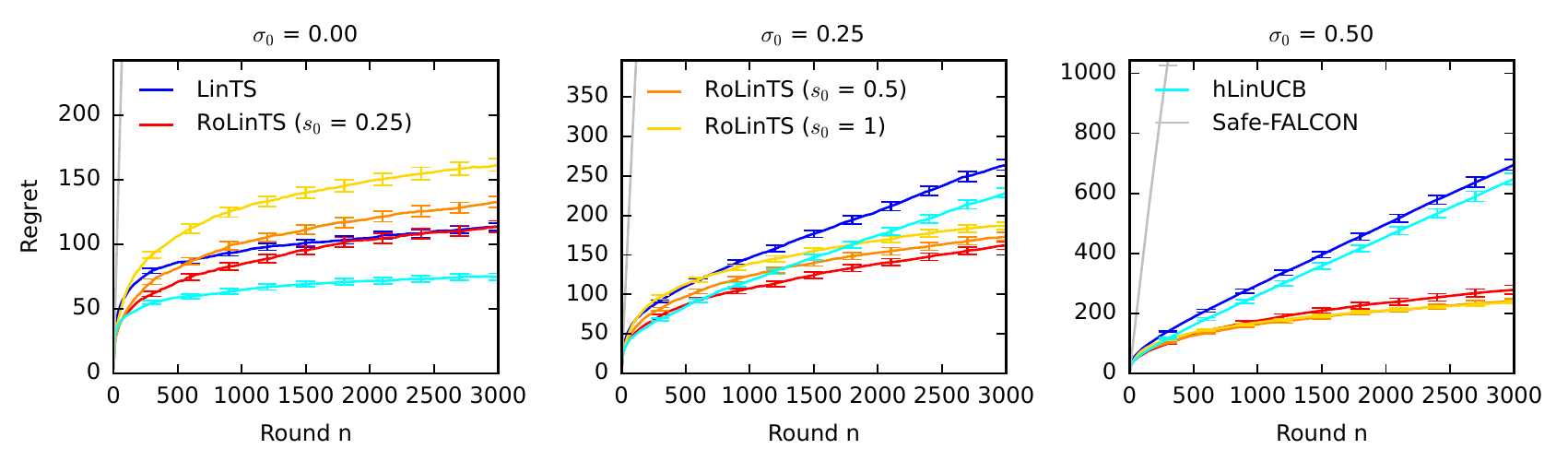}
  \vspace{-0.3in}
  \caption{Comparison of \rolints to three baselines in three synthetic problems. The results are averaged over $100$ runs.}
  \label{fig:synthetic}
\end{figure*}

\subsection{Synthetic Experiment}
\label{sec:synthetic experiment}

Our first experiment is with three robust contextual bandits where $\sigma_0 \in \{0, 0.25, 0.5\}$. Note that $\sigma_0 = 0$ corresponds to a contextual linear bandit. We set $K = 50$ and $d = 10$. The model parameter and features are sampled from $\mathcal{N}(\mathbf{0}, \mbf{I}_d)$ and uniformly at random from $[-1, 1]^d$, respectively. The reward noise is $\sigma = 1$. We compare \rolints, run with $\sigma_0 \in \{0.25, 0.5, 1\}$, to \lints, which can be viewed as \rolints with $\sigma_0 = 0$ (\cref{sec:estimation2}). To distinguish $\sigma_0$ in \rolints from the environment parameter, we denote the former by $s_0$. We consider two additional baselines, \safefalcon of \citet{pmlr-v139-krishnamurthy21a} and \hlinucb of \citet{10.1145/2983323.2983847}, which are introduced in \cref{sec:related work}. We set the complexity term in \safefalcon to $\xi(n, \zeta) = d \log(1 / \zeta) / n$, since we have $d$-dimensional regression problems. In \hlinucb, we learn $K$ additional features per arm. This choice is motivated by the fact that \rolints can be implemented inefficiently as \lints with $K$ additional features per arm (\cref{sec:proof}).

Our results are reported in \cref{fig:synthetic}. In the left plot, the environment is a contextual linear bandit and \lints has a sublinear regret. In this case, \rolints is not expected to outperform it, because it learns an additional random-effect parameter per arm. Nevertheless, all variants of \rolints have a sublinear regret in $n$. A higher regret corresponds to higher values of $s_0$, which is expected since \rolints with a higher value of $s_0$ is more uncertain about the underlying model being linear. In the middle plot, the environment is a robust contextual bandit with $\sigma_0 = 0.25$. Although this model is only slightly misspecified, \lints fails and has a linear regret. In comparison, all variants of \rolints have a sublinear regret, even with the misspecified random effect $s_0 \in \{0.5, 1\}$. This highlights the robustness of our approach to misspecification. Finally, in the right plot, the environment is a robust contextual bandit with $\sigma_0 = 0.5$. We observe that the gap in the regret of \lints and \rolints increases with $\sigma_0$. In this case, \rolints has at least twice lower regret than \lints for all values of $s_0$. When the model is correctly specified ($\sigma_0 = 0$), \hlinucb outperforms both \lints and \rolints. When the model is misspecified ($\sigma_0 > 0$), \hlinucb performs similarly to \lints and \rolints outperforms it. \safefalcon is too conservative to be competitive. In all experiments, its regret at $n = 3\,000$ rounds is an order of magnitude higher than that of \rolints.

\begin{figure*}[t!]
  \centering
  \includegraphics[width=6.8in]{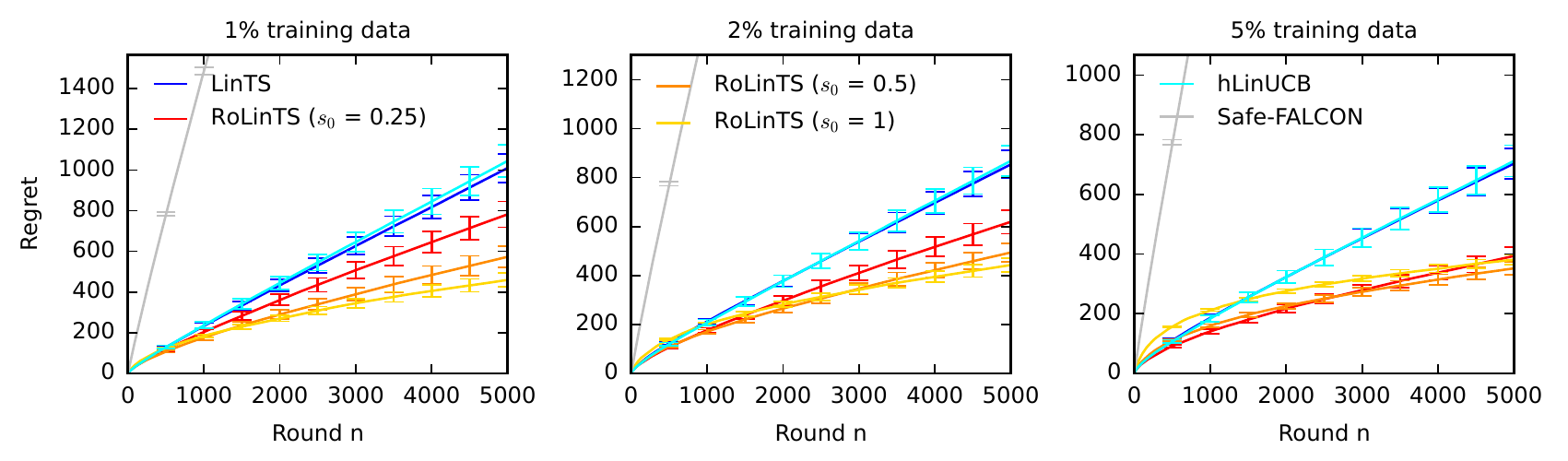}
  \vspace{-0.3in}
  \caption{Comparison of \rolints to three baselines on a movie recommendation problem with misspecified item features. The results are averaged over $500$ runs.}
  \label{fig:movielens}
\end{figure*}

\subsection{MovieLens Experiment}
\label{sec:movielens experiment}

This experiment shows the utility of \rolints in a linear bandit with misspecified features. Specifically, we have a linear function $u^\top v$ that models the mean rating of user $u$ for movie $v$, where $u$ is an unknown preference vector of the user and $v$ is a feature vector of a movie. The challenge is that the agent only knows $\hat{v}$, an estimate of $v$ from logged data. All parameters in this experiment are estimated by matrix completion: $u$ and $v$ are learned from the test set, and $\hat{v}$ is learned from the training set. When the training set is small, $\hat{v}$ is a poor estimate of $v$, and thus the mean rating of a movie is not linear in $\hat{v}$. This can be addressed by learning a separate bias term per movie, which is what \rolints does.

The experiment is specifically set up as follows. We take the MovieLens 1M dataset \citep{movielens} with $n_u = 6\,000$ users, $n_i = 4\,000$ items, and $1$ million ratings. We divide the dataset equally into the training and test sets. In the training set, we apply alternating least-squares to complete the rating matrix. The result are latent user $\hat{\mbf{U}} \in \realset^{n_u \times d}$ and item $\hat{\mbf{V}} \in \realset^{n_i \times d}$ factors, where $d$ is the factorization rank and $\hat{\mbf{U}} \hat{\mbf{V}}^\top$ estimates the mean ratings for all user-item pairs. We apply the same approach to the test set, and obtain latent user $\mbf{U} \in \realset^{n_u \times d}$ and item $\mbf{V} \in \realset^{n_i \times d}$ factors.

The interaction with a recommender system is simulated as follows. We choose a random user and $K = 50$ random movies, and the goal is to learn to recommend the best of these movies to the chosen user. This is repeated $500$ times. The feature vector of movie $i$ is $\hat{\mbf{V}}_{i, :}$, which is the $i$-th row of matrix $\hat{\mbf{V}}$. The challenge is that the mean rating of movie $i$ for user $j$ is $\mbf{U}_{j, :} \mbf{V}_{i, :}^\top$. Therefore, it is linear in the unobserved $\mbf{V}_{i, :}$ but not in the observed $\hat{\mbf{V}}_{i, :}$. \rolints can adapt to this misspecification by essentially learning $\mbf{U}_{j, :} (\mbf{V}_{i, :} - \hat{\mbf{V}}_{i, :})^\top$. The rating noise is $\mathcal{N}(0, \sigma^2)$ with $\sigma = 0.759$, which is estimated from data.

Our results are reported in \cref{fig:movielens}. In the left plot, we only use $1\%$ of the training set to estimate movie features $\hat{\mbf{V}}$. In this case, the estimated features are highly uncertain and the regret of \lints is clearly linear $n$. The regret of \rolints is significantly lower for all $s_0 \in \{0.25, 0.5, 1\}$, up to twice for $s_0 = 1$ at $n = 5\,000$. This clearly shows that \rolints can partially address the problem of misspecified features. In the next two plots, we use $2\%$ and $5\%$ of the training set to estimate movie features $\hat{\mbf{V}}$. As the features become more precise, all methods improve. Nevertheless, the benefit of adapting to model misspecification persists. We also observe that \hlinucb performs better than \lints but is worse than \rolints. \safefalcon is too conservative to be competitive. Its regret at $n = 5\,000$ rounds is an order of magnitude higher than that of \rolints.

\subsection{Run Time}
\label{sec:run time}

The challenge with implementing \rolints naively (\cref{sec:proof}) is $O((d + K)^3)$ computational cost, due to inverting $(d + K) \times (d + K)$ precision matrices. Our efficient implementation (\cref{sec:computational efficiency}) inverts only $d \times d$ precision matrices and its cost is $O(d^2 (d + K))$ per round. To show this empirically, we compare the run time of \rolints to its naive implementation, called \lints for simplicity. We take the setup from \cref{fig:synthetic}, vary $K$ from $16$ to $1024$, and measure the run time of $100$ random runs over a horizon of $n = 500$ rounds.

\begin{table}[t]
  \centering
  \begin{tabular}{|l|r|r|} \hline
    $K$ & \rolints & \lints \\ \hline
    $16$ & $0.7$ & $1.1$ \\
    $32$ & $0.8$ & $2.0$ \\
    $64$ & $0.9$ & $4.6$ \\
    $128$ & $1.1$ & $14.7$ \\
    $256$ & $1.5$ & $75.6$ \\
    $512$ & $2.9$ & $498.7$ \\
    $1\,024$ & $5.2$ & $4\,275.4$ \\ \hline
  \end{tabular}
  \caption{Run times of \rolints and \lints as functions of the number of arms $K$. The time is measured in seconds.}
  \label{tab:run time}
\end{table}

Our results are reported in \cref{tab:run time}. These results confirm our expectation. For large $K$, the run time of \rolints doubles when $K$ doubles. Therefore, it is linear in $K$. On the hand, for large $K$, the run time of \lints is nearly cubic in $K$. For a moderately large number of arms, $K = 128$, \rolints is $10$ times faster than \lints. For a large number of arms, $K = 1\,024$, \rolints is a thousand times faster than \lints.

\section{Conclusions}
\label{sec:conclusions}

Model misspecification in bandits, when the optimal arm under the assumed model is not optimal, can lead to catastrophic failures of contextual bandit algorithms and linear regret. We mitigate this by proposing robust contextual linear bandits. The key idea in our model is that the mean reward of an arm is a dot product of its context and a shared model parameter, which is offset by an arm-specific variable. This approach is statistically efficient because the model parameter is shared by all arms; yet quite robust to model misspecification due to learning the arm-specific variables. We propose UCB and posterior-sampling algorithms for our setting, show how to implement them efficiency, prove regret bounds that reflect the structure of our problem, and also validate the algorithms empirically.

Our work has several limitations that can be addressed by future works. For instance, although \cref{them-R-Bayes} reflects some structure of our problem (\cref{sec:discussion}), it is not completely satisfactory. In particular, as the inter-arm heterogeneity diminishes, $\sigma_0 \to 0$, one would expected a regret bound of $O(\sqrt{d n})$ while we get $O(\sqrt{(d + K) n})$. Proving of the improved bound seems highly non-trivial due to complex correlations of all estimated model parameters in the posterior. Another limitation of our work is that we focus on linear models. We plan to extend robust contextual bandits to generalized linear models. Finally, although we have not discussed hyper-parameter tuning and learning, we do so in \cref{sec:extension}.

\bibliographystyle{plainnat}
\bibliography{Bandit,Brano}

\clearpage
\onecolumn
\appendix

\section{Posterior of $\mu_{i,t}$}

\begin{lemma}\label{sec:mu0Dist}
Let
$z_{i, j}\sim \mathcal{N}(0, \sigma^2)$ and $\mu_i\sim\mathcal{N}(0, \sigma_0^2)$.  
Assuming that $\sigma^2$ and $\sigma_0^2$ are known, and $\sbf{\theta} \sim \mathcal{N}(0, \lambda\mbf{I})$, 
we have that $\mu_{i,t}\mid H_t  \sim \mathcal{N}(\hat{\mu}_{i,t}, \tau_{i,t}^{2})$. 
\end{lemma}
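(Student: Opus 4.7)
The plan is to invoke the reformulation from \eqref{eq:bayesian}: with augmented feature $\mbf{z}_{i,t} = \mbf{x}_{i,t} \oplus e_i$, augmented parameter $\sbf{\gamma} = \sbf{\theta} \oplus (v_i)_{i \in [K]}$, and block-diagonal prior covariance $\mbf{\Sigma}_0$ with upper block $\lambda^{-1}\mbf{I}_d$ and lower block $\sigma_0^2 \mbf{I}_K$, the model becomes a standard Bayesian linear regression $\mu_{i,t} = \mbf{z}_{i,t}\T \sbf{\gamma}$ with Gaussian reward noise. By the conjugate posterior formula, $\sbf{\gamma} \mid H_t \sim \cN(\sbf{\gamma}_t, \mbf{\Sigma}_t)$ with $\mbf{\Sigma}_t^{-1} = \mbf{\Sigma}_0^{-1} + \sigma^{-2}\mbf{Z}_t\T \mbf{Z}_t$ and $\sbf{\gamma}_t = \sigma^{-2} \mbf{\Sigma}_t \mbf{Z}_t\T \mbf{r}_t$, so $\mu_{i,t} \mid H_t \sim \cN(\mbf{z}_{i,t}\T \sbf{\gamma}_t,\; \mbf{z}_{i,t}\T \mbf{\Sigma}_t \mbf{z}_{i,t})$. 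It then suffices to verify the two identities $\mbf{z}_{i,t}\T \sbf{\gamma}_t = \hat{\mu}_{i,t}$ and $\mbf{z}_{i,t}\T \mbf{\Sigma}_t \mbf{z}_{i,t} = \tau_{i,t}^2$.

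For the mean, I would observe that since the prior and likelihood are both Gaussian, the posterior mean of $\sbf{\gamma}$ coincides with the MAP estimator. The MAP loss of the augmented Bayesian linear regression is precisely $L(v_1, \ldots, v_K, \sbf{\theta})$ in \eqref{MAP-likelihood}, whose joint minimizer is $\hat{\sbf{\theta}}_t \oplus (\hat{v}_{i,t})_{i \in [K]}$ by \eqref{beta0-eqn} and \eqref{bk-eqn}. Hence $\sbf{\gamma}_t = \hat{\sbf{\theta}}_t \oplus (\hat{v}_{i,t})_{i \in [K]}$ and $\mbf{z}_{i,t}\T \sbf{\gamma}_t = \mbf{x}_{i,t}\T \hat{\sbf{\theta}}_t + \hat{v}_{i,t} = \hat{\mu}_{i,t}$, matching \eqref{mu-prediction-x2}.

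For the variance, I would partition $\mbf{\Sigma}_t^{-1}$ according to the block structure of $\sbf{\gamma}$: the upper-left $d\times d$ block is $A = \lambda\mbf{I}_d + \sigma^{-2}\sum_i \mbf{X}_{i,t}\T \mbf{X}_{i,t}$, the upper-right $d\times K$ block $B$ has $i$-th column $\sigma^{-2}n_{i,t}\bar{\mbf{x}}_{i,t}$, and the lower-right block is diagonal with entries $\sigma_0^{-2} + \sigma^{-2}n_{i,t}$, so that $D^{-1}_{ii} = \sigma_0^2(1 - w_{i,t})$ by direct calculation. Applying the standard block inverse formula, the Schur complement $A - BD^{-1}B\T$ becomes the upper-left block of $\mbf{\Sigma}_t$. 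A short computation using the identity $n_{i,t}^2 \sigma_0^2/(\sigma^2 + n_{i,t}\sigma_0^2) = n_{i,t} w_{i,t}$ shows that this Schur complement equals $\mbf{M}_t$, and that $BD^{-1}$ has $i$-th column $w_{i,t}\bar{\mbf{x}}_{i,t}$. Expanding $\mbf{z}_{i,t}\T \mbf{\Sigma}_t \mbf{z}_{i,t}$ via the block inverse and substituting these identities collapses the quadratic form to $(\mbf{x}_{i,t} - w_{i,t}\bar{\mbf{x}}_{i,t})\T \mbf{M}_t^{-1}(\mbf{x}_{i,t} - w_{i,t}\bar{\mbf{x}}_{i,t}) + \sigma_0^2(1 - w_{i,t}) = \tau_{i,t}^2$, as required.

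The main obstacle is the block matrix arithmetic together with the simultaneous bookkeeping of the three algebraic identities $D^{-1}_{ii} = \sigma_0^2(1 - w_{i,t})$, $BD^{-1} e_i = w_{i,t}\bar{\mbf{x}}_{i,t}$, and $n_{i,t}^2 \sigma_0^2/(\sigma^2 + n_{i,t}\sigma_0^2) = n_{i,t} w_{i,t}$. These are the bridges that make the Schur complement collapse to $\mbf{M}_t$ and the cross and diagonal contributions collapse to the quadratic form in $\mbf{x}_{i,t} - w_{i,t}\bar{\mbf{x}}_{i,t}$; once they are in place, the remaining steps are routine.
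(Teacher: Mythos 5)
Your proposal is correct, but it proves the lemma by a genuinely different route than the paper. You work directly with the joint posterior of the augmented parameter $\sbf{\gamma} = \sbf{\theta} \oplus (v_i)_{i \in [K]}$ from the reformulation \eqref{eq:bayesian}: conjugacy gives $\sbf{\gamma} \mid H_t \sim \cN(\sbf{\gamma}_t, \mbf{\Sigma}_t)$, the Gaussian mean-equals-mode argument identifies $\sbf{\gamma}_t$ with the minimizer of \eqref{MAP-likelihood} so that $\mbf{z}_{i,t}\T\sbf{\gamma}_t = \hat{\mu}_{i,t}$, and a block-inverse/Schur-complement computation (your identities $D^{-1}_{ii} = \sigma_0^2(1-w_{i,t})$, $B D^{-1} e_i = w_{i,t}\bar{\mbf{x}}_{i,t}$, and $A - BD^{-1}B\T = \mbf{M}_t$ all check out) collapses $\mbf{z}_{i,t}\T \mbf{\Sigma}_t \mbf{z}_{i,t}$ to $\tau_{i,t}^2$. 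The paper instead never inverts the $(d+K)$-dimensional precision matrix: it conditions on $\sbf{\theta}$, notes that $\mu_{i,t} \mid \sbf{\theta}, H_t$ is Gaussian with mean affine in $\sbf{\theta}$ (namely $w_{i,t}\bar{r}_{i,t} + (\mbf{x}_{i,t}-w_{i,t}\bar{\mbf{x}}_{i,t})\T\sbf{\theta}$) and variance $\sigma_0^2(1-w_{i,t})$, derives $\sbf{\theta} \mid H_t \sim \cN(\hat{\sbf{\theta}}_t, \mbf{M}_t^{-1})$ by marginalizing the $v_i$'s (treating $\bar{r}_{i,t}$ as observations with variance $\sigma_0^2 + \sigma^2/n_{i,t}$), and composes the two Gaussians. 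The paper's argument is shorter and avoids block algebra, at the cost of leaving the two conditional-distribution facts as ``simple calculations''; your route is heavier on bookkeeping but makes fully explicit the equivalence with the augmented linear bandit that \cref{sec:proof} relies on (indeed it shows $\mbf{M}_t^{-1}$ is exactly the upper-left block of $\mbf{\Sigma}_t$), and as a byproduct re-verifies the efficient-implementation formulas of \cref{sec:computational efficiency}. Your aside about the posterior of $\sbf{\theta}$ is not needed for your argument, and the rest is routine as you say.
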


\begin{proof}
Recall the following well-known identity. Let $Y|X=x\sim \mathcal{N}(ax+b,\sigma^2)$ and $X\sim \mathcal{N}(\mu,\sigma_x^2)$. Then
$$
Y\sim \mathcal{N}(a\mu+b,a^2\sigma_x^2+\sigma^2).
$$
Obviously, if we set $X$ to $\sbf{\theta}\mid H_t$ and $Y$ to $\mu_{i,t}\mid H_t$, we can apply this result to obtain the distribution of $\mu_{i,t}\mid H_t$ from the distributions of $\mu_{i,t}\mid \sbf{\theta}, H_t$ and $\sbf{\theta}\mid H_t$. 
Simple calculation shows that $\mu_{i,t}\mid \sbf{\theta}, H_t$ is a Gaussian with mean in $\tilde{\mu}_{i,t}
=\mbf{x}_{i,t}\sbf{\theta}+w_{i,t}(\bar{r}_{i,t}-\bar{\mbf{x}}_{i,t}^{\top}\sbf{\theta})$ and variance in $\sigma_0^2(1-w_{i,t})$.

Now we derive the distribution of $\sbf{\theta}\mid H_t$. 
Assuming $\sbf{\theta} \sim \mathcal{N}(0, \lambda\mbf{I})$, 
$\bar{r}_{i,t}$ can be considered to be generated from the following Bayesian model:
$$
\bar{r}_{i,t}\mid \sbf{\theta} \sim \mathcal{N}(\mbf{x}_{i,t}^{\top}\sbf{\theta}, \sigma_0^2 + \sigma^2 / n_k), \quad
\sbf{\theta} \sim \mathcal{N}(0, \lambda\mbf{I}).
$$
Thus, the distribution of $\sbf{\theta}\mid H_t$ is easily obtained as 
$$
\sbf{\theta}\mid H_t \sim \mathcal{N}(\hat{\sbf{\theta}}_{t}, \mbf{Q}_{t}),
$$ 
where
\begin{align*}
\hat{\sbf{\theta}}_{t} &=\left(\lambda\mbf{I}+\sum\limits_{i=1}^K\mbf{X}_{i,t}^{\top}\mbf{V}_{i,t}^{-1}\mbf{X}_{i,t}\right)^{-1}\sum\limits_{i=1}^K\mbf{X}_{i,t}^{\top}\mbf{V}_{i,t}^{-1}\mbf{r}_{i,t}, \text{and} \notag\\
\mbf{Q}_{t}& =\left(\lambda\mbf{I}+\sum\limits_{i=1}^K\mbf{X}_{i,t}^{\top}\mbf{V}_{i,t}^{-1}\mbf{X}_{i,t}\right)^{-1}.
\end{align*}
Using the above results, we obtain the distribution of $\mu_{i,t}\mid H_t$ from the distributions of $\mu_{i,t}\mid \sbf{\theta}, H_t$ and $\sbf{\theta}\mid H_t$. That distribution is a Gaussian with mean in \eqref{mu-prediction-x} and variance in \eqref{MSE-x}. This completes the proof.
\end{proof}

\clearpage

\section{General Bayes Regret Analysis}
\label{sec:general bayes regret analysis}

We study a linear bandit in $d$ dimensions with action set $\cA \subseteq \realset^d$. The model parameter is $\theta_* \in \realset^d$ and we assume that it is drawn from a prior as $\theta_* \sim \cN(\theta_0, \Sigma_0)$. The mean reward of action $a \in \cA$ is $a\T \theta_*$. In round $t$, the learning agent takes action $A_t \in \cA_t$, where $\cA_t \subseteq \cA$ is the set of feasible actions in round $t$. The round-dependent action set $\cA_t$ can be used to model context. After the agent takes action $A_t$, it observes its noisy reward $Y_t = A_t\T \theta_* + \varepsilon_t$, where $\varepsilon_t \sim \cN(0, \sigma^2)$ is an independent Gaussian noise.

The history in round $t$ is $H_t$ and the posterior distribution is $\cN(\hat{\theta}_t, \hat{\Sigma}_t)$, where
\begin{align*}
  \hat{\theta}_t
  = \hat{\Sigma}_t \left(\Sigma_0^{-1} \theta_0 +
  \sigma^{-2} \sum_{\ell = 1}^{t - 1} A_\ell Y_\ell\right)\,, \quad
  \hat{\Sigma}_t^{-1}
  = \Sigma_0^{-1} + G_t\,, \quad
  G_t
  = \sigma^{-2} \sum_{\ell = 1}^{t - 1} A_\ell A_\ell\T\,.
\end{align*}
The optimal action in round $t$ is $A_{t, *} = \argmax_{a \in \cA_t} a\T \theta_*$ and our performance metric is the \emph{$n$-round Bayes regret}
\begin{align*}
  R(n)
  = \E{\sum_{t = 1}^n A_{t, *}\T \theta_* - A_t\T \theta_*}\,,
\end{align*}
where the expectation in $R(n)$ is over random observations $Y_t$, random actions $A_t$, and random model parameter $\theta_*$.

We analyze two algorithms: linear Thompson sampling (\lints) and Bayesian \linucb. \lints is implemented as follows. In round $t$, it samples the model parameter as $\theta_t \sim \cN(\hat{\theta}_t, \hat{\Sigma}_t)$ and then takes action $A_t = \argmax_{a \in \cA_t} a\T \theta_t$. Bayesian \linucb is implemented as follows. In round $t$, it computes a UCB for each action $a \in \cA_t$ as $U_t(a) = a\T \hat{\theta}_t + \alpha \normw{a}{\hat{\Sigma}_t}$ and then takes action $A_t = \argmax_{a \in \cA_t} U_t(a)$, where $\normw{a}{M} = \sqrt{a\T M a}$ and $\alpha > 0$ is a tunable parameter.

The final regret bound is stated below.

\begin{theorem}
\label{thm:general regret bound} For any $\delta > 0$, the $n$-round Bayes regret of both \lints and Bayesian \linucb with $\alpha = \sqrt{2 d \log(1 / \delta)}$ is bounded as
\begin{align*}
  R(n)
  \leq \sigma_{\max} d \sqrt{\frac{2 n}{\log(1 + \sigma^{-2} \sigma_{\max}^2)}
  \log\left(1 + \frac{\sigma_{\max}^2 n}{\sigma^2 d}\right) \log(1 / \delta)} +
  \sqrt{2 / \pi} \sigma_{\max} d^\frac{3}{2} n \delta\,,
\end{align*}
where $\sigma_{\max} = \max_{a \in \cA,\, t \in [n]} \normw{a}{\hat{\Sigma}_t}$ and $\cV(n) = \E{\sum_{t = 1}^n \normw{A_t}{\hat{\Sigma}_t}^2}$. Moreover, when $\abs{\cA_t} \leq K$ holds in all rounds $t$, the $n$-round Bayes regret of both \lints and Bayesian \linucb with $\alpha = \sqrt{2 \log(1 / \delta)}$ is bounded as
\begin{align*}
  R(n)
  \leq \sigma_{\max} \sqrt{\frac{2 d n}{\log(1 + \sigma^{-2} \sigma_{\max}^2)}
  \log\left(1 + \frac{\sigma_{\max}^2 n}{\sigma^2 d}\right) \log(1 / \delta)} +
  \sqrt{2 / \pi} \sigma_{\max} K n \delta\,.
\end{align*}
\end{theorem}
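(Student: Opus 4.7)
The plan is to give a unified analysis of Bayesian \linucb and \lints via the confidence-set plus posterior-matching framework. I introduce the common upper-confidence function $U_t(a) = a\T \hat\theta_t + \alpha \normw{a}{\hat\Sigma_t}$ and the ``good'' event $E_t = \{a\T \theta_* \le U_t(a) \text{ for every relevant action } a\}$. Since $\theta_* \mid H_t \sim \cN(\hat\theta_t, \hat\Sigma_t)$, a scalar Gaussian tail inequality handles each fixed $a$ via $\mathbb{P}(a\T \theta_* > U_t(a) \mid H_t) \le e^{-\alpha^2/2}$. When $\abs{\cA_t} \le K$, a union bound over the $K$ candidate arms makes $\alpha = \sqrt{2 \log(1/\delta)}$ sufficient; for an arbitrary action set, one instead invokes the $\chi^2_d$ tail of $\normw{\theta_* - \hat\theta_t}{\hat\Sigma_t^{-1}}^2$ and sets $\alpha = \sqrt{2 d \log(1/\delta)}$ so that the confidence event holds uniformly over all directions.

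Next, I would decompose each round's Bayes regret as
\begin{align*}
  \E{A_{t,*}\T \theta_* - A_t\T \theta_*}
  & = \E{A_{t,*}\T \theta_* - U_t(A_{t,*})} \\
  & \quad + \E{U_t(A_{t,*}) - U_t(A_t)} + \E{U_t(A_t) - A_t\T \theta_*}.
\end{align*}
For Bayesian \linucb the middle term is nonpositive by the greedy selection rule, and for \lints it is exactly zero because, conditionally on $H_t$, the posterior sample satisfies $\theta_t \overset{d}{=} \theta_*$, so $A_t \overset{d}{=} A_{t,*}$ and $U_t$ is $H_t$-measurable. The first term is nonpositive on $E_t$; on $E_t^c$ it is controlled by the half-normal identity $\E{\abs{Z} \I{\abs{Z} > \alpha}} = \sqrt{2/\pi}\, e^{-\alpha^2/2}$ applied to $a\T(\theta_* - \hat\theta_t)/\normw{a}{\hat\Sigma_t}$, yielding the residual $\sqrt{2/\pi}\, \sigma_{\max}\, (\cdot)\, n \delta$, where the bracket factor is $K$ in the bounded case and $d^{3/2}$ in the general case (the extra $\sqrt{d}$ arises from $\E{\normw{\theta_* - \hat\theta_t}{\hat\Sigma_t^{-1}} \I{E_t^c}}$ via the $\chi_d$ moment). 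The third term is bounded on $E_t$ by $2 \alpha \normw{A_t}{\hat\Sigma_t}$ using the definition of $U_t$ together with the confidence inequality at $A_t$.

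For the main-order contribution I would apply Cauchy--Schwarz to get $\sum_{t=1}^n \normw{A_t}{\hat\Sigma_t} \le \sqrt{n \sum_{t=1}^n \normw{A_t}{\hat\Sigma_t}^2}$, and then invoke an elliptic-potential argument. Using the elementary inequality $y \le \frac{\sigma_{\max}^2}{\log(1 + \sigma^{-2} \sigma_{\max}^2)}\, \log(1 + y/\sigma^2)$ valid for $y \in [0, \sigma_{\max}^2]$, together with the telescoping identity $\sum_t \log(1 + \sigma^{-2} \normw{A_t}{\hat\Sigma_t}^2) = \log \det(\mbf{I}_d + \sigma^{-2} \Sigma_0^{1/2} G_{n+1} \Sigma_0^{1/2})$ and the AM--GM trace bound $\log \det \le d \log(1 + \sigma_{\max}^2 n / (\sigma^2 d))$, I would obtain $\sum_t \normw{A_t}{\hat\Sigma_t}^2 \le \frac{d \sigma_{\max}^2}{\log(1 + \sigma^{-2} \sigma_{\max}^2)}\, \log(1 + \sigma_{\max}^2 n /(\sigma^2 d))$. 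Substituting back and plugging in the two choices of $\alpha$ reproduces the two stated inequalities.

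The main obstacle I expect is the general-action-set case: the uniform Gaussian tail control across all $a$ requires replacing the per-arm $\cN(0, \normw{a}{\hat\Sigma_t}^2)$ tail by the $\chi^2_d$ tail of the full residual, and then carefully tracking how the extra $\sqrt{d}$ propagates both through the confidence radius (producing the outer factor $d$ rather than $\sqrt{d}$) and through the off-event expectation (producing the $d^{3/2}$ residual). The elliptic-potential step itself is standard, but it must be executed with the generalized prior covariance $\Sigma_0$, which is why the determinant normalization appears with $\Sigma_0^{1/2} G_{n+1} \Sigma_0^{1/2}$ rather than $\sigma^{-2} G_{n+1}$ alone.
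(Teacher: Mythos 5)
Your skeleton is essentially the paper's: per-action confidence events with a union bound over the $K$ arms (radius $\sqrt{2\log(1/\delta)}$) versus a dimension-dependent radius $\sqrt{2d\log(1/\delta)}$ for general action sets, half-normal integrals for the off-event contribution, cancellation of the middle term (nonpositive for \linucb by the greedy rule, zero for \lints because $A_t$ and $A_{t,*}$ are i.i.d.\ given $H_t$ and $U_t$ is $H_t$-measurable), and finally Cauchy--Schwarz plus the elliptic-potential argument with the prior covariance $\Sigma_0$, giving $\cV(n)\le \frac{d\,\sigma_{\max}^2}{\log(1+\sigma^{-2}\sigma_{\max}^2)}\log\bigl(1+\frac{\sigma_{\max}^2 n}{\sigma^2 d}\bigr)$ exactly as in the paper. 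Your unified three-term decomposition through $U_t$ is the paper's Bayesian \linucb argument extended to \lints via posterior matching (the paper treats \lints separately by applying Cauchy--Schwarz directly to $A_{t,*}\T(\theta_*-\hat{\theta}_t)$), so in substance the route is the same.

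The one genuine gap is your handling of the third term, $\condE{U_t(A_t)-A_t\T\theta_*}{H_t}$, which you bound by $2\alpha\normw{A_t}{\hat{\Sigma}_t}$ on the confidence event. This is problematic in two ways: it doubles the leading term, so the two stated inequalities are not reproduced with their constants, and it requires a two-sided event while leaving the off-event contribution of this term (which, unlike the first term, is not nonpositive off the event) uncontrolled. The paper avoids both issues with no event at all: $U_t(A_t)-A_t\T\theta_* = A_t\T(\hat{\theta}_t-\theta_*)+\alpha\normw{A_t}{\hat{\Sigma}_t}$ and $\condE{A_t\T(\hat{\theta}_t-\theta_*)}{H_t}=0$, since $A_t$ is $H_t$-measurable for \linucb and, for \lints, is a function of the posterior sample which is independent of $\theta_*$ given $H_t$, while $\theta_*-\hat{\theta}_t$ has zero conditional mean; hence the third term contributes exactly $\alpha\,\condE{\normw{A_t}{\hat{\Sigma}_t}}{H_t}$. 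With that fix your argument goes through and yields the stated bounds. A minor further difference: in the general-action-set case the paper does not evaluate a truncated $\chi_d$ moment; it uses $\normw{\theta_*-\hat{\theta}_t}{\hat{\Sigma}_t^{-1}}\le\sqrt{d}\,\maxnorm{\hat{\Sigma}_t^{-1/2}(\theta_*-\hat{\theta}_t)}$ and a coordinatewise union bound with the same half-normal integral, which gives the residual $\sqrt{2/\pi}\,\sigma_{\max}d^{3/2}\delta$ exactly; your $\chi_d$-moment route would need a separate check that it does not exceed this quantity.
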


\subsection{Infinite Number of Contexts}
\label{sec:infinite contexts}

We start the analysis with a useful lemma for \lints.

\begin{lemma}
\label{lem:bayes regret} For any $\delta > 0$, the $n$-round Bayes regret of \lints is bounded as
\begin{align*}
  R(n)
  \leq \sqrt{2 d n \cV(n) \log(1 / \delta)} +
  \sqrt{2 / \pi} \sigma_{\max} d^\frac{3}{2} n \delta\,,
\end{align*}
where $\sigma_{\max} = \max_{a \in \cA,\, t \in [n]} \normw{a}{\hat{\Sigma}_t}$ and $\cV(n) = \E{\sum_{t = 1}^n \normw{A_t}{\hat{\Sigma}_t}^2}$.
\end{lemma}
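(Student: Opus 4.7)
The plan is to adapt the Russo--Van Roy Bayes-regret argument for Thompson sampling via the $H_t$-measurable UCB functional $U_t(a) = a\T \hat\theta_t + \alpha \normw{a}{\hat\Sigma_t}$. The key posterior-sampling identity is that, conditionally on $H_t$, the true parameter $\theta_*$ and the TS sample $\theta_t$ are i.i.d.\ from $\cN(\hat\theta_t, \hat\Sigma_t)$, so $A_{t,*}$ and $A_t$ are exchangeable given $H_t$, and since $U_t$ is $H_t$-measurable,
\begin{equation*}
\E{U_t(A_{t,*}) \mid H_t} = \E{U_t(A_t) \mid H_t}.
\end{equation*}
Adding and subtracting $U_t$ inside the instantaneous regret yields
\begin{equation*}
R(n) = \sum_{t=1}^n \E{A_{t,*}\T \theta_* - U_t(A_{t,*})} + \sum_{t=1}^n \E{U_t(A_t) - A_t\T \theta_*},
\end{equation*}
and the proof reduces to bounding these two sums separately.

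For the ``width'' term I would write $U_t(A_t) - A_t\T \theta_* = A_t\T(\hat\theta_t - \theta_*) + \alpha \normw{A_t}{\hat\Sigma_t}$ and observe that $A_t$ is a function of $H_t$ and the independent sample $\theta_t$, hence $A_t \perp \theta_* \mid H_t$; combined with $\E{\theta_* \mid H_t} = \hat\theta_t$, the cross term has zero expectation. Cauchy--Schwarz and Jensen's inequality then give
\begin{equation*}
\sum_{t=1}^n \E{U_t(A_t) - A_t\T \theta_*} = \alpha \sum_{t=1}^n \E{\normw{A_t}{\hat\Sigma_t}} \leq \alpha \sqrt{n \cV(n)} = \sqrt{2 d n \cV(n) \log(1/\delta)},
\end{equation*}
which matches the leading summand in the claimed bound for the choice $\alpha = \sqrt{2d \log(1/\delta)}$.

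For the ``miss'' term I would uniformize over $\cA_t$. For any $H_t$-measurable $a$, Cauchy--Schwarz in the $\hat\Sigma_t^{-1}$ inner product gives $a\T(\theta_* - \hat\theta_t) \leq \normw{a}{\hat\Sigma_t} \cdot \normw{\theta_* - \hat\theta_t}{\hat\Sigma_t^{-1}}$, so
\begin{equation*}
\bigl(a\T \theta_* - U_t(a)\bigr)^+ \leq \normw{a}{\hat\Sigma_t} \bigl(\normw{\theta_* - \hat\theta_t}{\hat\Sigma_t^{-1}} - \alpha\bigr)^+ \leq \sigma_{\max} (\|\eta\| - \alpha)^+,
\end{equation*}
where $\eta := \hat\Sigma_t^{-1/2}(\theta_* - \hat\theta_t)$ is $\cN(\sbf{0}, \mbf{I}_d)$ conditionally on $H_t$. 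This uniform bound then also applies to $a = A_{t,*}$. To control $\E{(\|\eta\| - \alpha)^+}$ I would use $\|\eta\| \leq \sqrt{d}\,\|\eta\|_\infty$, giving $(\|\eta\| - \alpha)^+ \leq \sqrt{d}\, \sum_{i=1}^d (|\eta_i| - \alpha/\sqrt{d})^+$; with $\alpha/\sqrt{d} = \sqrt{2\log(1/\delta)}$, the elementary one-dimensional bound $\E{(|Z| - c)^+} \leq 2\phi(c)$ for $Z \sim \cN(0,1)$ evaluates to $\sqrt{2/\pi}\,\delta$ per coordinate, hence $\E{(\|\eta\| - \alpha)^+} \leq \sqrt{2/\pi}\, d^{3/2}\, \delta$. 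Summing over $t$ yields the additive term $\sqrt{2/\pi}\,\sigma_{\max}\, d^{3/2} n \delta$.

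The main obstacle is exactly this last step: because $A_{t,*}$ depends on $\theta_*$ through an argmax over a possibly infinite action set, a fixed-$a$ one-dimensional Gaussian tail on $a\T(\theta_* - \hat\theta_t)$ is not enough, and I must pass to the $\chi_d^2$ norm of $\theta_* - \hat\theta_t$. This inflates the required confidence radius by $\sqrt{d}$, and converting the resulting tail bound into an expected shortfall via the union over coordinates of $\eta$ contributes another $\sqrt{d}$, which is the source of the $d^{3/2}$ in the additive term. In the finite-action case, a direct union bound over $|\cA_t| \leq K$ actions replaces $d^{3/2}$ by a linear factor in $K$ and removes the $\sqrt{d}$ inflation of $\alpha$; this is the refinement exploited for the second part of \cref{thm:general regret bound}.
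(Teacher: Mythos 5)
Your proposal is correct and reaches exactly the bound of \cref{lem:bayes regret} with the same constants, but it routes the argument slightly differently from the paper. The paper pivots at the posterior mean: it writes the per-round regret as $\condE{A_{t,*}\T(\theta_*-\hat{\theta}_t)}{H_t}+\condE{A_t\T(\hat{\theta}_t-\theta_*)}{H_t}$ (using the same exchangeability of $A_{t,*}$ and $A_t$ given $H_t$ that you invoke), kills the second term, and then splits the first term on the confidence event $E_t=\{\normw{\theta_*-\hat{\theta}_t}{\hat{\Sigma}_t^{-1}}\le\sqrt{2d\log(1/\delta)}\}$; on $E_t$ it uses a second application of the i.i.d.\ swap to replace $\normw{A_{t,*}}{\hat{\Sigma}_t}$ by $\normw{A_t}{\hat{\Sigma}_t}$, and on $\bar{E}_t$ it uses the same $\sqrt{d}\,\maxnorm{\cdot}$ relaxation and per-coordinate Gaussian tail that you use, yielding $\sqrt{2/\pi}\,\sigma_{\max}d^{3/2}\delta$ per round. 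You instead pivot at the $H_t$-measurable index $U_t(a)=a\T\hat{\theta}_t+\alpha\normw{a}{\hat{\Sigma}_t}$, so the ``width'' term produces $\alpha\,\condE{\normw{A_t}{\hat{\Sigma}_t}}{H_t}$ directly and the ``miss'' term becomes an expected positive-part shortfall, which your bound $\E{(|Z|-c)^+}\le 2\phi(c)$ controls (this is in fact marginally tighter than the paper's $\E{|Z|\I{|Z|\ge c}}$ tail integral, though both evaluate to $\sqrt{2/\pi}\,\delta$ per coordinate). The two routes are equivalent in substance; what yours buys is unification, since your decomposition is precisely the one the paper itself uses for Bayesian \linucb in \cref{sec:bayesian linucb}, so the TS and UCB analyses become a single computation, whereas the paper's TS proof avoids introducing the tunable radius $\alpha$ at all. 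The closing Cauchy--Schwarz and Jensen step to convert $\E{\sum_t\normw{A_t}{\hat{\Sigma}_t}}$ into $\sqrt{n\,\cV(n)}$, and the finite-action refinement you sketch, match the paper (\cref{sec:finite contexts}) as well.
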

\begin{proof}
Fix round $t$. Since $\hat{\theta}_t$ is a deterministic function of $H_t$, and $A_{t, *}$ and $A_t$ are i.i.d.\ given $H_t$, we have
\begin{align}
  \E{A_{t, *}\T \theta_* - A_t\T \theta_*}
  = \E{\condE{A_{t, *}\T (\theta_* - \hat{\theta}_t)}{H_t}} +
  \E{\condE{A_t\T (\hat{\theta}_t - \theta_*)}{H_t}}\,.
  \label{eq:lints decomposition}
\end{align}
Now note that $\hat{\theta}_t - \theta_*$ is a zero-mean random vector independent of $A_t$, and thus $\condE{A_t\T (\hat{\theta}_t - \theta_*)}{H_t} = 0$. So we only need to bound the first term in \eqref{eq:lints decomposition}. Let
\begin{align*}
  E_t =
  \set{\normw{\theta_* - \hat{\theta}_t}{\hat{\Sigma}_t^{-1}}
  \leq \sqrt{2 d \log(1 / \delta)}}
\end{align*}
be the event that a high-probability confidence interval for the model parameter $\theta_*$ in round $t$ holds. Fix history $H_t$. Then by the Cauchy-Schwarz inequality,
\begin{align}
  \condE{A_{t, *}\T (\theta_* - \hat{\theta}_t)}{H_t}
  & \leq \condE{\normw{A_{t, *}}{\hat{\Sigma}_t}
  \normw{\theta_* - \hat{\theta}_t}{\hat{\Sigma}_t^{-1}}}{H_t}
  \label{eq:infinite context decomposition} \\
  & = \condE{\normw{A_{t, *}}{\hat{\Sigma}_t}
  \normw{\theta_* - \hat{\theta}_t}{\hat{\Sigma}_t^{-1}} \I{E_t}}{H_t} +
  \condE{\normw{A_{t, *}}{\hat{\Sigma}_t}
  \normw{\theta_* - \hat{\theta}_t}{\hat{\Sigma}_t^{-1}} \I{\bar{E}_t}}{H_t}
  \nonumber \\
  & \leq \sqrt{2 d \log(1 / \delta)} \, \condE{\normw{A_{t, *}}{\hat{\Sigma}_t}}{H_t} +
  \underbrace{\max_{a \in \cA} \normw{a}{\hat{\Sigma}_t}}_{\leq \sigma_{\max}}
  \condE{\normw{\theta_* - \hat{\theta}_t}{\hat{\Sigma}_t^{-1}}
  \I{\bar{E}_t}}{H_t}
  \nonumber \\
  & = \sqrt{2 d \log(1 / \delta)} \, \condE{\normw{A_t}{\hat{\Sigma}_t}}{H_t} +
  \sigma_{\max} \, \condE{\normw{\theta_* - \hat{\theta}_t}{\hat{\Sigma}_t^{-1}}
  \I{\bar{E}_t}}{H_t}\,.
  \nonumber
\end{align}
The second equality follows from the observation that $\hat{\Sigma}_t$ is a deterministic function of $H_t$, and that $A_{t, *}$ and $A_t$ are i.i.d.\ given $H_t$. Now we focus on the second term above. First, note that
\begin{align*}
  \normw{\theta_* - \hat{\theta}_t}{\hat{\Sigma}_t^{-1}}
  = \normw{\hat{\Sigma}^{- \frac{1}{2}}_t (\theta_* - \hat{\theta}_t)}{2}
  \leq \sqrt{d} \maxnorm{\hat{\Sigma}^{- \frac{1}{2}}_t (\theta_* - \hat{\theta}_t)}\,.
\end{align*}
By definition, $\theta_* - \hat{\theta}_t \mid H_t \sim \cN(\mathbf{0}, \hat{\Sigma}_t)$, and thus $\hat{\Sigma}^{- \frac{1}{2}}_t (\theta_* - \hat{\theta}_t) \mid H_t$ is a $d$-dimensional standard normal variable. In addition, note that $\bar{E}_t$ implies $\maxnorm{\hat{\Sigma}^{- \frac{1}{2}}_t (\theta_* - \hat{\theta}_t)} \geq \sqrt{2 \log(1 / \delta)}$. Finally, we combine these facts with a union bound over all entries of $\hat{\Sigma}^{- \frac{1}{2}}_t (\theta_* - \hat{\theta}_t) \mid H_t$, which are standard normal variables, and get
\begin{align*}
  \condE{\maxnorm{\hat{\Sigma}^{- \frac{1}{2}}_t (\theta_* - \hat{\theta}_t)}
  \I{\bar{E}_t}}{H_t}
  \leq 2 \sum_{i = 1}^d \frac{1}{\sqrt{2 \pi}} \int_{u = \sqrt{2 \log(1 / \delta)}}^\infty
  u \exp\left[- \frac{u^2}{2}\right] \dif u
  = \sqrt{\frac{2}{\pi}} d \delta\,.
\end{align*}
Now we combine all inequalities and have
\begin{align*}
  \condE{A_{t, *}\T (\theta_* - \hat{\theta}_t)}{H_t}
  \leq \sqrt{2 d \log(1 / \delta)} \, \condE{\normw{A_t}{\hat{\Sigma}_t}}{H_t} +
  \sqrt{\frac{2}{\pi}} \sigma_{\max} d^\frac{3}{2} \delta\,.
\end{align*}
Since the above bound holds for any history $H_t$, we combine everything and get
\begin{align*}
  \E{\sum_{t = 1}^n A_{t, *}\T \theta_* - A_t\T \theta_*}
  & \leq \sqrt{2 d \log(1 / \delta)} \,
  \E{\sum_{t = 1}^n \normw{A_t}{\hat{\Sigma}_t}} +
  \sqrt{\frac{2}{\pi}} \sigma_{\max} d^\frac{3}{2} n \delta \\
  & \leq \sqrt{2 d n \log(1 / \delta)}
  \sqrt{\E{\sum_{t = 1}^n \normw{A_t}{\hat{\Sigma}_t}^2}} +
  \sqrt{\frac{2}{\pi}} \sigma_{\max} d^\frac{3}{2} n \delta\,.
\end{align*}
The last step uses the Cauchy-Schwarz inequality and the concavity of the square root. This completes the proof.
\end{proof}

\subsection{Finite Number of Contexts}
\label{sec:finite contexts}

The proof of \cref{lem:bayes regret} relies on confidence intervals that hold for any context. Further improvements, from $O(\sqrt{d})$ to $O(\sqrt{\log K})$, are possible when the number of contexts is $\abs{\cA_t} \leq K$. This is due to improving the first term in \eqref{eq:infinite context decomposition}. Specifically, fix round $t$ and let
\begin{align*}
  E_{t, a} =
  \set{|a\T (\theta_* - \hat{\theta}_t)|
  \leq \sqrt{2 \log(1 / \delta)} \normw{a}{\hat{\Sigma}_t}}
\end{align*}
be the event that a high-probability confidence interval for action $a \in \cA_t$ in round $t$ holds. Then we have
\begin{align*}
  \condE{A_{t, *}\T (\theta_* - \hat{\theta}_t)}{H_t}
  \leq \sqrt{2 \log(1 / \delta)} \, \condE{\normw{A_{t, *}}{\hat{\Sigma}_t}}{H_t} +
  \condE{A_{t, *}\T (\theta_* - \hat{\theta}_t) \I{\bar{E}_{t, A_{t, *}}}}{H_t}\,.
\end{align*}
Now note that for any action $a$, $a\T (\theta_* - \hat{\theta}_t) / \normw{a}{\hat{\Sigma}_t}$ is a standard normal variable. It follows that
\begin{align*}
  \condE{A_{t, *}\T (\theta_* - \hat{\theta}_t) \I{\bar{E}_{t, A_{t, *}}}}{H_t}
  \leq 2 \sum_{a \in \cA_t} \normw{a}{\hat{\Sigma}_t} \frac{1}{\sqrt{2 \pi}}
  \int_{u = \sqrt{2 \log(1 / \delta)}}^\infty
  u \exp\left[- \frac{u^2}{2}\right] \dif u
  \leq \sqrt{\frac{2}{\pi}} \sigma_{\max} K \delta\,,
\end{align*}
where $\sigma_{\max}$ is defined as in \eqref{eq:infinite context decomposition}. The rest of the proof is as in \cref{lem:bayes regret} and leads to the regret bound below.

\begin{lemma}
\label{lem:finite-context bayes regret} For any $\delta > 0$, the $n$-round Bayes regret of \lints is bounded as
\begin{align*}
  R(n)
  \leq \sqrt{2 n \cV(n) \log(1 / \delta)} +
  \sqrt{2 / \pi} \sigma_{\max} K n \delta\,,
\end{align*}
where $\sigma_{\max}$ and $\cV(n)$ are defined as in \cref{lem:bayes regret}.
\end{lemma}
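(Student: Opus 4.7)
The plan is to mirror the proof of \cref{lem:bayes regret} verbatim up through the decomposition of the instantaneous Bayes regret, and then replace the single dimension-wise confidence interval by a family of per-action confidence intervals combined with a union bound over the (at most $K$) feasible actions. Fix round $t$. Since $\hat{\theta}_t$ and $\hat{\Sigma}_t$ are deterministic functions of $H_t$, and under posterior sampling $A_{t,*}$ and $A_t$ are conditionally i.i.d.\ given $H_t$, the identity
\begin{align*}
  \E{A_{t,*}\T \theta_* - A_t\T \theta_*}
  = \E{\condE{A_{t,*}\T (\theta_* - \hat{\theta}_t)}{H_t}}
\end{align*}
still holds because $\condE{A_t\T (\hat{\theta}_t - \theta_*)}{H_t} = 0$. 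So it suffices to control $\condE{A_{t,*}\T (\theta_* - \hat{\theta}_t)}{H_t}$.

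For each action $a$, define $E_{t,a} = \{\abs{a\T(\theta_* - \hat{\theta}_t)} \leq \sqrt{2 \log(1/\delta)} \normw{a}{\hat{\Sigma}_t}\}$. Splitting on the event $E_{t,A_{t,*}}$ and its complement yields
\begin{align*}
  \condE{A_{t,*}\T (\theta_* - \hat{\theta}_t)}{H_t}
  \leq \sqrt{2 \log(1/\delta)}\, \condE{\normw{A_{t,*}}{\hat{\Sigma}_t}}{H_t}
  + \condE{A_{t,*}\T (\theta_* - \hat{\theta}_t) \I{\bar{E}_{t,A_{t,*}}}}{H_t}.
\end{align*}
The key observation is that for every fixed $a$, the scalar $a\T (\theta_* - \hat{\theta}_t) / \normw{a}{\hat{\Sigma}_t}$ is a standard normal variable given $H_t$. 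A union bound over the $\leq K$ feasible actions together with the Gaussian tail identity $\int_{u \geq \sqrt{2\log(1/\delta)}} u e^{-u^2/2}\dif u = \delta$ bounds the second term by $\sqrt{2/\pi}\,\sigma_{\max} K \delta$, where $\sigma_{\max}$ absorbs $\normw{a}{\hat{\Sigma}_t}$. Crucially this is where the improvement from $\sqrt{d \log(1/\delta)}$ to $\sqrt{\log(1/\delta)}$ comes in: the union bound is now over actions, not coordinates.

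Next I apply the Thompson-sampling posterior matching trick, $\condE{\normw{A_{t,*}}{\hat{\Sigma}_t}}{H_t} = \condE{\normw{A_t}{\hat{\Sigma}_t}}{H_t}$, to replace the inaccessible optimal-action term by the pulled-action term. Summing over $t \in [n]$, taking total expectation, and applying Cauchy--Schwarz across rounds followed by Jensen's inequality on the square root gives
\begin{align*}
  \E{\sum_{t=1}^n \normw{A_t}{\hat{\Sigma}_t}}
  \leq \sqrt{n\, \E{\sum_{t=1}^n \normw{A_t}{\hat{\Sigma}_t}^2}}
  = \sqrt{n\, \cV(n)},
\end{align*}
which produces the first term of the claimed bound, and the per-round bad-event contributions accumulate to $\sqrt{2/\pi}\,\sigma_{\max} K n \delta$, giving the second term.

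The main obstacle I anticipate is bookkeeping rather than any genuine new idea: one must be careful that the posterior matching really holds after multiplying the random quantity $\normw{A_{t,*}}{\hat{\Sigma}_t}$ by the confidence width (which is a deterministic function of $H_t$ and $\delta$), and that the union-bound step is valid even when $\cA_t$ is random, since we only need $\abs{\cA_t} \leq K$ almost surely. Once those measurability points are checked, the remainder is routine algebra paralleling \cref{lem:bayes regret}.
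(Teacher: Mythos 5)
Your proposal is correct and matches the paper's own proof essentially step for step: the same decomposition inherited from \cref{lem:bayes regret}, the same per-action events $E_{t,a}$ with a union bound over the at most $K$ feasible actions using the standard-normal tail of $a\T(\theta_* - \hat{\theta}_t)/\normw{a}{\hat{\Sigma}_t}$, the same posterior-matching step, and the same Cauchy--Schwarz/Jensen argument yielding $\sqrt{2 n \cV(n) \log(1/\delta)} + \sqrt{2/\pi}\,\sigma_{\max} K n \delta$. The measurability points you flag (the confidence width being a deterministic function of $H_t$, and $\abs{\cA_t} \leq K$ sufficing even for random action sets) are exactly the ones the paper's argument implicitly uses, so nothing further is needed.
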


\subsection{Bayesian \linucb}
\label{sec:bayesian linucb}

This section generalizes \cref{lem:bayes regret,lem:finite-context bayes regret} to Bayesian \linucb. The key observation is that the decomposition in \eqref{eq:lints decomposition} can be replaced with
\begin{align}
  \E{A_{t, *}\T \theta_* - A_t\T \theta_*}
  \leq \E{\condE{A_{t, *}\T \theta_* - U_t(A_{t, *})}{H_t}} +
  \E{\condE{U_t(A_t) - A_t\T \theta_*}{H_t}}\,,
  \label{eq:linucb decomposition}
\end{align}
where $U_t(A_t) \geq U_t(A_{t, *})$ holds by the design of Bayesian \linucb. The second term in \eqref{eq:linucb decomposition} can be rewritten as
\begin{align*}
  \condE{U_t(A_t) - A_t\T \theta_*}{H_t}
  = \condE{A_t\T (\hat{\theta}_t - \theta_*) + \alpha \normw{A_t}{\hat{\Sigma}_t}}{H_t}
  = \alpha \, \condE{\normw{A_t}{\hat{\Sigma}_t}}{H_t}\,,
\end{align*}
where the last inequality follows from $\condE{A_t\T (\hat{\theta}_t - \theta_*)}{H_t} = 0$, by the same argument as in \cref{sec:infinite contexts}. To bound the first term in \eqref{eq:linucb decomposition}, we rewrite it as
\begin{align*}
  \condE{A_{t, *}\T \theta_* - U_t(A_{t, *})}{H_t}
  = \condE{A_{t, *}\T (\theta_* - \hat{\theta}_t) -
  \alpha \normw{A_{t, *}}{\hat{\Sigma}_t}}{H_t}\,.
\end{align*}
Now we have two options for deriving the upper bound. The first is
\begin{align*}
  A_{t, *}\T (\theta_* - \hat{\theta}_t) - \alpha \normw{A_{t, *}}{\hat{\Sigma}_t}
  & \leq \normw{A_{t, *}}{\hat{\Sigma}_t}
  \normw{\theta_* - \hat{\theta}_t}{\hat{\Sigma}_t^{-1}} -
  \alpha \normw{A_{t, *}}{\hat{\Sigma}_t} \\
  & \leq \normw{A_{t, *}}{\hat{\Sigma}_t}
  \normw{\theta_* - \hat{\theta}_t}{\hat{\Sigma}_t^{-1}}
  \I{\normw{\theta_* - \hat{\theta}_t}{\hat{\Sigma}_t^{-1}} > \alpha} \\
  & \leq \sigma_{\max} \normw{\theta_* - \hat{\theta}_t}{\hat{\Sigma}_t^{-1}} \I{\bar{E}_t}\,,
\end{align*}
where $E_t$ is defined in \cref{sec:infinite contexts} and thus $\alpha = \sqrt{2 d \log(1 / \delta)}$. Following \cref{sec:infinite contexts}, we get
\begin{align*}
  \condE{A_{t, *}\T \theta_* - U_t(A_{t, *})}{H_t}
  \leq \sqrt{2 / \pi} \sigma_{\max} d^\frac{3}{2} \delta\,.
\end{align*}
The second option is
\begin{align*}
  A_{t, *}\T (\theta_* - \hat{\theta}_t) - \alpha \normw{A_{t, *}}{\hat{\Sigma}_t}
  \leq A_{t, *}\T (\theta_* - \hat{\theta}_t)
  \I{A_{t, *}\T (\theta_* - \hat{\theta}_t) > \alpha \normw{A_{t, *}}{\hat{\Sigma}_t}}
  \leq A_{t, *}\T (\theta_* - \hat{\theta}_t) \I{\bar{E}_{t, A_{t, *}}}\,,
\end{align*}
where $E_{t, a}$ is defined in \cref{sec:finite contexts} and thus $\alpha = \sqrt{2 \log(1 / \delta)}$. Following \cref{sec:finite contexts}, we get
\begin{align*}
  \condE{A_{t, *}\T \theta_* - U_t(A_{t, *})}{H_t}
  \leq \sqrt{2 / \pi} \sigma_{\max} K \delta\,.
\end{align*}
It follows that the regret of Bayesian \linucb is identical to that of \lints.

\subsection{Upper Bound on the Sum of Posterior Variances}
\label{sec:posterior variances}

Now we bound the sum of posterior variances $\cV(n)$ in \cref{lem:bayes regret,lem:finite-context bayes regret}. Fix round $t$ and note that
\begin{align}
  \normw{A_t}{\hat{\Sigma}_t}^2
  = \sigma^2 \frac{A_t\T \hat{\Sigma}_t A_t}{\sigma^2}
  \leq c_1 \log(1 + \sigma^{-2} A_t\T \hat{\Sigma}_t A_t)
  & = c_1 \log\det(I_d + \sigma^{-2}
  \hat{\Sigma}_t^\frac{1}{2} A_t A_t\T \hat{\Sigma}_t^\frac{1}{2})
  \label{eq:sequential proof decomposition}
\end{align}
for
\begin{align*}
  c_1
  = \frac{\sigma_{\max}^2}{\log(1 + \sigma^{-2} \sigma_{\max}^2)}\,.
\end{align*}
This upper bound is derived as follows. For any $x \in [0, u]$,
\begin{align*}
  x
  = \frac{x}{\log(1 + x)} \log(1 + x)
  \leq \left(\max_{x \in [0, u]} \frac{x}{\log(1 + x)}\right) \log(1 + x)
  = \frac{u}{\log(1 + u)} \log(1 + x)\,.
\end{align*}
Then we set $x = \sigma^{-2} A_t\T \hat{\Sigma}_t A_t$ and use the definition of $\sigma_{\max}$.

The next step is bounding the logarithmic term in \eqref{eq:sequential proof decomposition}, which can be rewritten as
\begin{align*}
  \log\det(I_d + \sigma^{-2}
  \hat{\Sigma}_t^\frac{1}{2} A_t A_t\T \hat{\Sigma}_t^\frac{1}{2})
  = \log\det(\hat{\Sigma}_t^{-1} + \sigma^{-2} A_t A_t\T) -
  \log\det(\hat{\Sigma}_t^{-1})\,.
\end{align*}
Because of that, when we sum over all rounds, we get telescoping and the total contribution of all terms is at most
\begin{align*}
  \sum_{t = 1}^n
  \log\det(I_d + \sigma^{-2} \hat{\Sigma}_t^\frac{1}{2} A_t
  A_t\T \hat{\Sigma}_t^\frac{1}{2})
  & = \log\det(\hat{\Sigma}_{n + 1}^{-1}) - \log\det(\hat{\Sigma}_1^{-1})
  = \log\det(\Sigma_0^\frac{1}{2} \hat{\Sigma}_{n + 1}^{-1} \Sigma_0^\frac{1}{2}) \\
  & \leq d \log\left(\frac{1}{d} \trace(\Sigma_0^\frac{1}{2} \hat{\Sigma}_{n + 1}^{-1}
  \Sigma_0^\frac{1}{2})\right)
  = d \log\left(1 + \frac{1}{\sigma^2 d} \sum_{t = 1}^n
  \trace(\Sigma_0^\frac{1}{2} A_t A_t\T \Sigma_0^\frac{1}{2})\right) \\
  & = d \log\left(1 + \frac{1}{\sigma^2 d} \sum_{t = 1}^n
  A_t\T \Sigma_0 A_t\right)
  \leq d \log\left(1 + \frac{\sigma_{\max}^2 n}{\sigma^2 d}\right)\,.
\end{align*}
This completes the proof.

\clearpage

\section{Additional Experiments}
\label{sec:additional experiments}

We conduct two additional experiments on the synthetic problem in \cref{sec:synthetic experiment}.

\begin{figure*}[t!]
  \centering
  \includegraphics[width=6.8in]{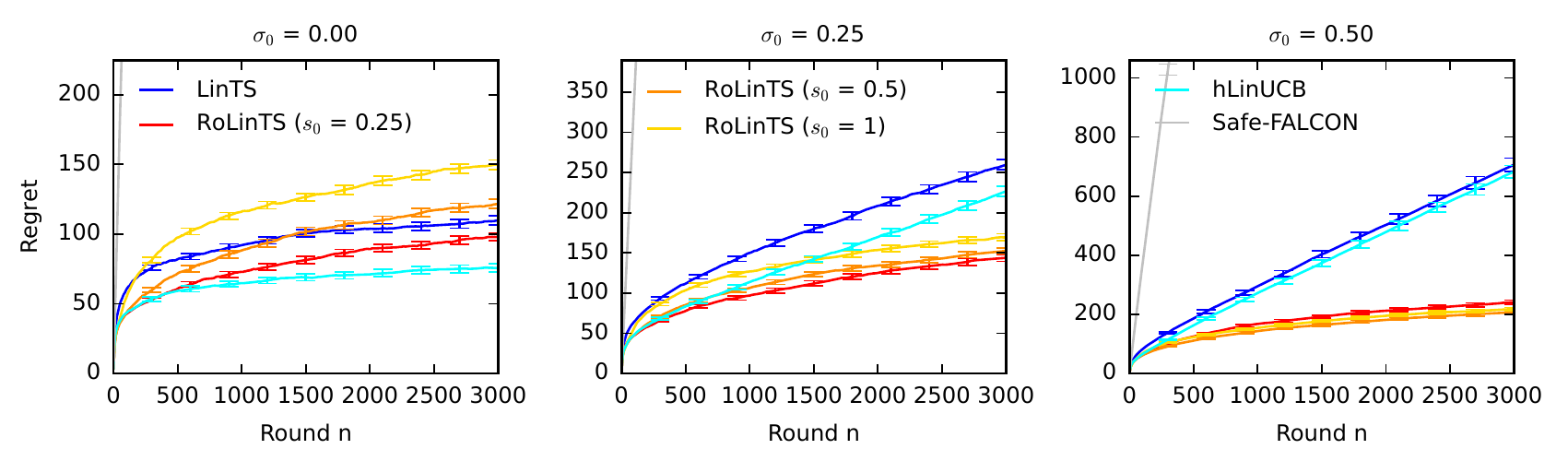}
  \caption{Comparison of misspecified \rolints to three baselines in three synthetic problems. The results are averaged over $100$ runs.}
  \label{fig:synthetic 2}
\end{figure*}

In the first experiment, we randomly perturb parameters $\lambda$ and $\sigma$ of \rolints to test its robustness. For each parameter, we choose a number $u \in [1, 3]$ uniformly at random. Then we multiply it by $u$ with probability $0.5$ or divide it by $u$ otherwise. That is, the parameter is increased up to three fold or decreased up to three fold. Our results are reported in \cref{fig:synthetic 2}. We observe that the regret of \rolints increases slightly. Nevertheless, all trends are similar to \cref{fig:synthetic}. We conclude that \rolints is robust to parameter misspecification.

\begin{figure*}[t!]
  \centering
  \includegraphics[width=6.8in]{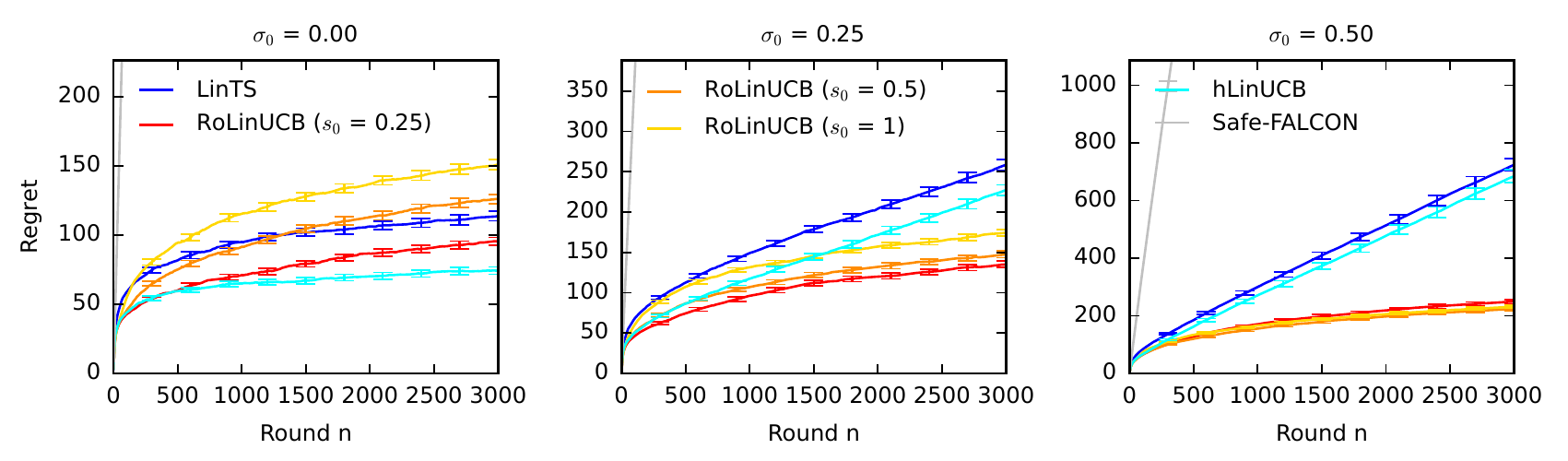}
  \caption{Comparison of \rolinucb to three baselines in three synthetic problems. The results are averaged over $100$ runs.}
  \label{fig:synthetic 4}
\end{figure*}

In the second experiment, we evaluate \rolinucb, a UCB variant of our algorithm. Our results are reported in \cref{fig:synthetic 4}. The setting is the same as in \cref{fig:synthetic} and we also observe similar trends. This is expected, since Bayesian UCB algorithms are known to be competitive with Thompson sampling \citet{kaufmann12bayesian}.

\clearpage

\section{Choosing Hyper-Parameters}
\label{sec:extension}

In this paper, we assume that the hyper-parameters are known by the agent. When the hyper-parameters are unknown, they have to be estimated. We end this paper by discussing this challenge.

For a fully-Bayesian treatment of hyper-parameters, it is desirable to marginalize over them. However, this is computationally intensive, often involving an integration. Let $\sbf{\psi}=(\sigma^2,\sigma_0^2,\lambda)^{\top}$ and $\hat{\mu}_{i,t}(\sbf{\psi})$ be the prediction as a function of $\sbf{\psi}$. By assuming some distribution on $\sbf{\psi}$, the fully-Bayesian method tries to take the following integration: 
\begin{align*}
\hat{\mu}_{i,t}^{\text{FB}}&=\int\hat{\mu}_{i,t}(\sbf{\psi})p(\sbf{\psi}|H_t)d\sbf{\psi}.
\end{align*} 
Although the integral can be done using many powerful tools in Bayesian statistics, the fully-Bayesian method is sensible to the prior setting of $\sbf{\psi}$, more seriously, is computationally intensive in bandit algorithms that requires sequential updating. 

Compared to a fully-Bayesian treatment, a simpler method is the empirical Bayes method \citep{CL:00}, which plugs in the estimates of the hyper-parameters from data. Various methods of obtaining consistent estimators are available, including the method of moments, maximum likelihood, and restricted maximum likelihood \citep{Harville:77}. Here we adopt the method of moments, since it has explicit formulas to update each round.


Unbiased quadratic estimate of $\sigma^2$ is given by 
\begin{align*}
\hat{\sigma}^2=&\left(\sum_{i=1}^K(n_{i, t}-d)\right)^{-1}\sum_{i=1}^K\sum_{\ell\in \mathcal{T}_{i,t}}\hat{e}_{i,\ell}^2,
\end{align*}
where $n_t=\sum_{i=1}^Kn_{i, t}$, 
and $\hat{e}_{i,\ell}$ are the residuals for the regression of the $r$ deviation, $r_{i,\ell}-\bar{r}_{i,t}$, on the context $\mbf{x}$ deviations,  $\mbf{x}_{i,\ell}-\bar{\mbf{x}}_{i,t}$, for those rewards with $n_{i, t}>1$. 
Unbiased quadratic estimate of $\sigma_0^2$ is given by 
\begin{align*}
\tilde{\sigma}_0^2=&(n_t-d_0)^{-1}\left[\sum_{i=1}^K\sum_{\ell\in \mathcal{T}_{i,t}}\hat{s}_{i,\ell}^2-(n_t-d)\hat{\sigma}^2\right],
\end{align*}
where $d_0=(\sum_{i=1}^K\mbf{X}_{i,t}^{\top}\mbf{X}_{i,t})^{-1}\sum_{i=1}^Kn_{i,t}^2\bar{\mbf{x}}_{i,t}\bar{\mbf{x}}_{i,t}^{\top}$ and $\hat{s}_{i,\ell}$ are the residuals for the regression of $r_{i,\ell}$ on the context $\mbf{x}_{i,\ell}$. 
It is possible that $\tilde{\sigma}_0^2$ is negative. Thus, we define $\hat{\sigma}_0^2=\max\{\tilde{\sigma}_0^2,0\}$. 
For simplifying the choice of $\lambda$, it is not bad to let $\lambda$ equal a tiny constant, such as $\lambda=0.01$. 

Although the empirical Bayes method with plugged-in variance estimates is convenient and useful, it is challenging to analyze. The reason is that the randomness in the estimated hyper-parameters needs to be considered in the regret analysis. We leave the theoretical investigation of this challenge as an open question of interest.

\end{document}